\newcommand{\mX}{\mathcal{X}}
\newcommand{\mO}{\mathcal{O}}
\renewcommand{\Pr}[1]{\ensuremath{\mathbb{P}\left[#1\right]}}
\newcommand{\Ex}[1]{\ensuremath{\mathbb{E}\left[#1\right]}}
\newcommand{\Exu}[2]{\ensuremath{\mathbb{E}_{#1}\left[#2\right]}}
\newcommand{\ind}[1]{\ensuremath{\mathds{1}\left[#1\right]}}
\DeclareMathOperator{\argmax}{arg\,max}
\DeclareMathOperator*{\arginf}{arg\,inf}
\newcommand{\eps}{\varepsilon}
\newcommand{\paranth}[1]{{\left( #1 \right)}}
\newcommand{\abs}[1]{{\left|#1 \right|}}
\newcommand{\rbr}[1]{\left(\,#1\,\right)}
\newcommand{\cbr}[1]{\left\{\,#1\,\right\}}
\newcommand{\vecdot}[2]{{\left\langle #1,#2 \right\rangle}}
\newcommand{\rade}{one-hot Rademacher random vector}
\newcommand{\eat}[1]{}
\newcommand{\Reg}{\textsc{Reg}}
\newcommand{\yhat}{{\hat{y}}}
\newcommand{\chat}{{\hat{c}}}
\newcommand{\Rel}{{\textsc{Rel}}}
\newtheorem{theorem}{Theorem}
\newtheorem{lemma}[theorem]{Lemma}
\newtheorem{definition}[theorem]{Definition}
\newcommand{\bistro}{BISTRO}
\newcommand{\bistplus}{BISTRO+}
\title{
An Improved Relaxation for Oracle-Efficient Adversarial Contextual Bandits
}
\author{%
 Kiarash Banihashem%
 \\
  University of Maryland, College Park
  \\
  \texttt{kiarash@umd.edu}
  \And
  MohammadTaghi Hajiaghayi
  \\
  University of Maryland, College Park
  \\
  \texttt{hajiagha@umd.edu}
  \And
  Suho Shin
  \\
  University of Maryland, College Park
  \\
  \texttt{suhoshin@umd.edu}
  \And
  Max Springer
  \\
  University of Maryland, College Park
  \\
  \texttt{mss423@umd.edu}
}
\newcommand{\mD}{\mathcal{D}}
\begin{document}

\maketitle

\begin{abstract}
  We present an oracle-efficient relaxation
  for the adversarial contextual bandits problem,
  where the contexts are sequentially drawn i.i.d from a known distribution and the cost sequence is chosen by an online adversary.
  Our algorithm has a regret bound of $O(T^{\frac{2}{3}}(K\log(|\Pi|))^{\frac{1}{3}})$ and makes at most $O(K)$ calls per round to an offline optimization oracle,
  where $K$ denotes the number of actions, $T$ denotes the number of rounds and $\Pi$ denotes 
  the set of policies.
  This is the first result to improve the prior best bound of $O((TK)^{\frac{2}{3}}(\log(|\Pi|))^{\frac{1}{3}})$ as obtained by 
  Syrgkanis et al.
  at NeurIPS 2016, and the first to match the original bound of 
  Langford and Zhang at NeurIPS 2007
  which was obtained for the stochastic case.
\end{abstract}
\section{Introduction}
\label{sec:intro}
One of the most important problems in the study of online learning algorithms is the \emph{contextual bandits problem}.
As a framework for studying decision making in the presence of side information,
the problem generalizes the classical \emph{multi-armed bandits problem}
and has numerous practical applications spanning across clinical research, personalized medical care and online advertising, with substantial emphasis placed on modern recommender systems.

In the classical multi-armed bandits problem,
a decision maker
is presented with $K$ actions (or arms) which it needs to choose from
over a sequence of $T$ rounds.
In each round, the decision maker makes its (possibly random) choice
and observes the cost of its chosen action.
Depending on the setting, this cost is generally assumed to be
either \emph{stochastic} or \emph{adversarial}.
In the stochastic setting, the cost of each action is
sampled i.i.d. from a fixed, but a priori unknown, distribution.
In the more general adversarial setting, no such assumption
is made and the costs in each round can be controlled by an online adversary.
The goal of the learner is to minimize its
\emph{regret}, defined as 
the absolute difference between its total
cost and the total cost of the best fixed action in hindsight.

The contextual bandits problem generalizes this by assuming that
in each round, the learner first sees some side information, referred to as a \emph{context},
$x \in \mX$
and chooses its action based on this information.
As in prior work \citep{rakhlin2016bistro,syrgkanis2016improved}, we assume that the $x$ are
sampled i.i.d. from a fixed distribution $\mD$, and that the learner 
can generate samples from $\mD$ as needed.
In addition, the learner has access to a set of \emph{policies},
$\Pi$, where a policy is defined as a mapping from contexts to actions.
As before, the goal of the learner
is to minimize its regret, which we here define as 
the absolute difference between its total
cost and the total cost of the best policy of $\Pi$ in hindsight.

It is well-known that by viewing each policy as an expert,
the problem can be reduced to the \emph{bandits with experts}
problem where, even in the adversarial setting, the \textsc{Exp}4~\citep{auer1995gambling} algorithm achieves 
the optimal regret bound of $O(\sqrt{TK \log(|\Pi|)})$.
Computationally however, the reduction is inefficient
as the algorithm's running time would be linear in $|\Pi|$.
Since the size of the policy set, $\Pi$, can be very large (potentially exponential), the utility of this algorithm is restricted in practice.

Given the computational challenge, 
there has been a surge of interest
in \emph{oracle-efficient} algorithms.
In this setting,
the learner is given access to an Empirical Risk Minimization (ERM)
optimization oracle which, for any sequence of pairs of contexts and loss vectors,
returns the best fixed policy in $\Pi$.
This effectively reduces the online problem to an offline learning problem, where many algorithms (e.g., SVM) are known to work well both in theory and practice.
This approach was initiated by the seminal work of \cite{langford2007epoch}, who 
obtained a regret rate of
$O(T^{2/3}(K \log |\Pi|)^{1/3})$ for \emph{stochastic rewards}, using an $\epsilon$-greedy algorithm.
This was later improved
to the
information-theoretically optimal
$O(\sqrt{TK \log|\Pi|})$
~\citep{dudik2011efficient}.
Subsequent works have
focused on improving the running time of these
algorithms
\citep{beygelzimer2011contextual,agarwal2014taming,simchi2022bypassing},
and extending them to a variety of problem settings
such as
auction design~\citep{dudik2020oracle},
minimizing dynamic regret~\citep{luo2018efficient,chen2019new},
bandits with knapsacks~\citep{agrawal2016efficient},
semi-bandits~\citep{krishnamurthy2016contextual},
corralling bandits~\citep{agarwal2017corralling},
smoothed analysis~\citep{haghtalab2022oracle, block2022smoothed},
and reinforcement learning~\citep{foster2021statistical}.

Despite the extensive body of work,
progress for \emph{adversarial rewards}
has been slow.
Intuitively, approaches for the stochastic
setting do not generalize to adversarial rewards because
they try to ``learn the environment'' and
in the adversarial setting, there is no environment to be learnt.
This issue can be seen in the original multi-armed bandits problem as well.
While the regret bounds for the adversarial setting and the stochastic setting are the same,
the standard approaches are very different.~\footnote{
  We note that any approach for the adversarial setting
  works for the stochastic setting as well. However, both in theory and practice,
  specialised approaches are more commonly used.
}
In the stochastic setting, the most standard approach is the UCB1 algorithm~\citep{auer2002finite},
which is intuitive and has a relatively simple analysis.
In the adversarial setting however, the standard approach is considerably more complex:
the problem is first solved in the ``full feedback'' setting, where
the learner observes all of the rewards in each iteration, using the Hedge algorithm.
This is then used as a black box to obtain an algorithm for the partial (or bandit) feedback
setting by constructing unbiased estimates for the rewards vector.
The analysis is also much more involved compared to UCB1; a standard
analysis of the black box reduction leads to the suboptimal bound of $T^{3/4}K^{1/2}$,
and further obtaining the optimal $\sqrt{TK}$ bound requires a more refined
second moment analysis (see Chapter 6 of \cite{slivkins2019introduction} for a more detailed overview).

As a result, oracle-efficient algorithms for the adversarial setting
were first proposed by the independent works of
\cite{rakhlin2016bistro} and \cite{syrgkanis2016efficient} in ICML 2016, who obtained a regret bound of $O(T^{3/4}K^{1/2}\log(|\Pi|)^{1/4})$
and left obtaining improvements as an open problem. 
This was subsequently improved to
$O((TK)^{2/3}\log(|\Pi|)^{1/3})$
by \cite{syrgkanis2016improved} at NeurIPS 2016.

\subsection{Our contribution and techniques}
\label{sec:our_cont}
In this work, we design an oracle-efficient algorithm with a regret bound of $O(T^{2/3}(K \log{|\Pi|})^{1/3})$.
Our result is the first improvement after that of~\cite{syrgkanis2016improved}, and 
maintains the best regret upper bound to the extent of our knowledge.
This is also the first result to match the bound of~\cite{langford2007epoch},
the original baseline algorithm for the stochastic version of the problem.
We state the informal version of our main result (Theorem \ref{thm.admis_regret}) in the following.
\begin{theorem}[Informal]\label{thm:informal1}
  For large enough $T$,\footnote{When $T$ is small,
    our improvement compared to prior work is more significant;
    we focus on the large $T$ setting to obtain a fair comparison.
  }
  there exists an algorithm (Algorithm \ref{alg:bandit}) that achieves expected regret on the order of
\begin{math}
	O(T^{2/3}(K\log|\Pi|)^{1/3})
\end{math}
  for the adversarial contextual bandits problem using at most $O(K)$ calls per round to an ERM oracle.
\end{theorem}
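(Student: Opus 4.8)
The plan is to operate inside the relaxation-based framework that underlies this whole line of work: rather than analyzing an algorithm directly, I would exhibit an \emph{admissible relaxation} $\Rel_t$ and show that the sampling strategy it induces inherits an expected-regret bound equal to the initial value $\Rel_0$. Concretely, I would condition on the history of observed contexts $x_{1:t}$ and importance-weighted cost estimates $\chat_{1:t}$, and impose the two usual requirements: a \emph{terminal} condition $\Rel_T \ge -\min_{\pi\in\Pi}\sum_{s}\chat_s(\pi(x_s))$, and a \emph{recursive} condition guaranteeing that at each round there is a distribution $q_t$ over the $K$ actions for which the worst-case per-round cost plus the expected next-step relaxation does not exceed $\Rel_{t-1}$. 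Granting these, a backward induction shows the induced algorithm has expected regret at most $\Rel_0$ once a standard unbiasedness argument ($\Ex{\chat_t}=c_t$ together with the elementary bound $\Ex{\min_\pi Z_\pi}\le\min_\pi\Ex{Z_\pi}$) reduces the true expected regret to the estimated objective the relaxation controls. The theorem then reduces to three tasks: (i) producing such a relaxation, (ii) bounding $\Rel_0$, and (iii) verifying that $q_t$ costs only $O(K)$ oracle calls.

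For task (i) I would use a \emph{random-playout} relaxation driven by the \rade: having fixed the past, draw the remaining contexts $x_{t+1:T}$ i.i.d.\ from $\mD$ and, independently for each future round, a one-hot Rademacher vector $r_s$ (a uniformly random coordinate carrying a single Rademacher sign, scaled to match the range of the estimator), and set
\begin{equation*}
  \Rel_t \;=\; \Exu{x_{t+1:T},\,r_{t+1:T}}{-\min_{\pi\in\Pi}\paranth{\sum_{s\le t}\chat_s(\pi(x_s)) + \sum_{s>t}\vecdot{r_s}{e_{\pi(x_s)}}}} \;+\; \lambda(T-t),
\end{equation*}
where $e_a$ is the standard basis vector for action $a$ and $\lambda$ is an offset absorbing the per-round variance of the estimator. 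The decisive point — and the source of the $K^{1/3}$ saving over \cite{syrgkanis2016improved} — is that the one-hot playout injects fictitious cost into a single coordinate per round, mirroring the bandit feedback structure, so the induced complexity scales like the \emph{bandit} Rademacher complexity rather than the full-information one.

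Task (ii), proving admissibility, is the main obstacle. I would take $\chat_t$ to be the importance-weighted estimator under a $q_t$ that mixes a relaxation-derived distribution with a $\mu$-uniform exploration floor, and then verify the recursive inequality. The heart of the argument is a symmetrization step showing that the worst-case adversarial vector $c_t$, after taking expectation over the one-hot ghost draw, is dominated by the one-hot Rademacher term already sitting in $\Rel_{t-1}$; this is exactly where the one-hot law must be matched to the $1/q_t$ blow-up of the importance weights, and where the exploration floor $\mu$ enters to keep the second moment of $\chat_t$ bounded by the per-round offset $\lambda$. Controlling this trade-off — balancing the estimator variance against $\lambda$ and the exploration regret $\mu T$ — is the delicate computation, and tuning $\mu$ and $\lambda$ correctly is what ultimately produces the stated rate.

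Finally, for task (ii)'s bound on $\Rel_0$ and task (iii): a Massart-type maximal inequality over the finite class $\Pi$ bounds the one-hot Rademacher complexity by $O(\sqrt{KT\log|\Pi|})$, giving $\Rel_0 = O(\sqrt{KT\log|\Pi|} + \lambda T + \mu T)$; optimizing $\mu,\lambda \sim (K\log|\Pi|/T)^{1/3}$ collapses this to $O(T^{2/3}(K\log|\Pi|)^{1/3})$. Oracle-efficiency follows because evaluating the induced $q_t$ reduces to performing, for each of the $K$ candidate current actions, a single random playout whose best policy on the combined real-plus-ghost sequence is returned by one ERM call, yielding the claimed $O(K)$ calls per round.
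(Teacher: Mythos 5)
Your overall architecture is the paper's: an admissible relaxation in the sense of the two conditions you state, a random playout driven by \emph{one-hot} Rademacher ghosts, a symmetrization step matched to the one-hot law of the importance-weighted estimator, and a water-filling computation of $q_t$ costing $K+1$ ERM calls per round. The terminal-condition and backward-induction reductions are also exactly as in Lemma~\ref{lem.relax_reg} and Appendix~\ref{app:final}.

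The genuine gap is in the quantitative heart of the argument: the law of the ghost cost and the maximal inequality it supports. You specify the ghost only as a one-hot Rademacher ``scaled to match the range of the estimator,'' i.e.\ magnitude $K/\gamma$ at a uniformly random coordinate, and you claim a Massart-type bound of $O(\sqrt{KT\log|\Pi|})$ on the resulting complexity. Neither half works. A ghost that always carries magnitude $K/\gamma$ has per-round second moment $(1/K)(K/\gamma)^2=K/\gamma^2$ for any fixed policy, so the maximal inequality gives $\sqrt{TK\log|\Pi|}/\gamma$; balancing this against the $\gamma T$ exploration cost only recovers a $T^{3/4}(K\log|\Pi|)^{1/4}$ rate, not the claimed $T^{2/3}$ rate. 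The paper instead matches the full \emph{law} of $\chat_t$, not just its range: the spike is additionally gated by an independent variable $Z_t\in\{0,K/\gamma\}$ that is nonzero only with probability $\gamma$, so each coordinate is hit with probability $\gamma/K$ --- exactly the constraint defining $\Delta'_D$ --- which is what makes the symmetrization in Lemma~\ref{lm:admis_new} tight and drops the per-round second moment to $K/\gamma$. Even then the correct bound (Theorem~\ref{lm:rade}, proved by an MGF computation exploiting both sources of sparsity rather than a plain Massart bound) is $2\sqrt{KT\log|\Pi|/\gamma}$, not $O(\sqrt{KT\log|\Pi|})$; the latter is the full-information rate and is not available here. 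The final rate still comes out to $T^{2/3}(K\log|\Pi|)^{1/3}$ only because at the optimal $\gamma=(4K\log|\Pi|/T)^{1/3}$ the term $\sqrt{KT\log|\Pi|/\gamma}$ happens to be of the same order as $\gamma T$; your accounting, in which the estimator's variance is ``absorbed'' into a separate additive offset $\lambda(T-t)$ and the Rademacher term is treated as lower order, does not correspond to a computation that closes --- in this framework an additive per-round constant absorbs the exploration cost of the uniform mixing, but the ghost variance necessarily lives inside the expected supremum and must be paid for there.
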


In order to compare this result with prior work, it is useful to consider
the regime of $K=T^{\alpha}$ for a constant $\alpha > 0$.
In this regime, our work leads to a sublinear regret bound for any $\alpha < 1$,
while prior work~\citep{rakhlin2016bistro,syrgkanis2016improved} can only obtain sublinear regret for $\alpha < 1/2$.

Our improved dependence on $K$ is important practically since,
for many real-world implementations such as recommender systems, the number of
possible actions is very large.
Additionally,
many bandits algorithms consider 
``fake" actions as part of a reduction.
For example, a large number of actions may be considered as part of a discretization scheme
for simulating a continuous action space \citep{slivkins_2009}.
In such cases, the improved dependence on $K$ could potentially imply an overall improvement with respect to $T$,
as the parameters in the algorithm are often chosen in such a way that optimizes their trade-off.

From a technical standpoint, our result 
builds on
the existing works based on the relax-and-randomize framework of~\cite{rakhlin2012relax}.
\cite{rakhlin2015hierarchies}
used this framework, together
with the ``random playout'' method, to study online prediction problems with evolving constraints
under the assumption of a full feedback model.
\cite{rakhlin2016bistro} extended
these techniques to the partial (bandit) feedback model,
and developed the \bistro{} algorithm
which achieves
a $O(T^{3/4}K^{1/2}\log(|\Pi|)^{1/4})$ regret bound for the adversarial contextual bandits problem.
Subsequently, \cite{syrgkanis2016improved}
used
a novel distribution of hallucinated rewards
as well as a sharper second moment analysis 
to obtain
a regret bound of $O((TK)^{2/3}\log(|\Pi|)^{1/3})$. 
We further improve the regret rate
to $O(T^{2/3}(K\log(|\Pi|))^{1/3})$ by reducing the support
of the hallucinated rewards vector to
a single random entry.
We note the previous approaches
of \cite{rakhlin2016bistro,syrgkanis2016improved},
as well as the closely related work of \cite{rakhlin2015hierarchies},
set all of the entries of the hallucinated cost to i.i.d Rademacher random variables.

We show that our novel relaxation preserves the main properties 
required for obtaining a regret bound, specifically, it is \emph{admissible}.
We then prove that
the Rademacher averages
term
that arises from our new relaxation
improves by a factor of $K$,
which consequently leads to a better regret bound.
We further refer to Section \ref{sec:algo} for a more detailed description
of our algorithm, and to Section \ref{sec:analysis} for the careful analysis.

\subsection{Related work}\label{sec:related}

\textbf{Contextual bandits.}
There are three prominent problem setups broadly studied in the contextual bandits literature: Lipschitz contextual bandits, linear contextual bandits, and contextual bandits with policy class.
Lipschitz contextual bandits~\citep{lu2009showing,cutkosky2017online} and linear contextual bandits~\citep{chu2011contextual} assume a structured payoff based on a Lipschitz or linear realizability assumption, respectively. 
The strong structural assumptions made by these works however make them impractical for many settings.
~\\
To circumvent this problem, many works consider contextual bandits with policy classes where
the problem is made tractable by making assumptions on the benchmark of regret.
In these works, the learner is given access to a policy class $\Pi$ and competes
against the best policy in $\Pi$.
This approach also draws connections to offline machine learning models that in recent years
have had a huge impact on many applications.
In order for an algorithm to be useful in practice however, it needs to be computationally tractable,
thus motivating the main focus of this work.

\textbf{Online learning with adversarial rewards.}
Closely related to our work is the online learning with experts problem
where, in each round, the learner observes a set of $N$ experts making recommendations for which action to take,
and decides which action to choose based on these recommendations.
The goal of the learner is to minimize its regret with respect to the best expert in hindsight.
In the full feedback setting, where the learner observes the cost of all actions,
the well-known Hedge~\citep{cesa1997use} algorithm based on a randomized weighted majority selection rule achieves the best possible regret bound of $O(\sqrt{T\ln N})$.
Correspondingly, in the partial feedback setting, \textsc{Exp}4~\citep{auer1995gambling} exploits Hedge by constructing 
unbiased ``hallucinated'' costs based on the inverse propensity score technique, and achieves the optimal regret bound of $O(\sqrt{KT \ln N})$.
By considering an expert for each policy, the contextual bandits problem can be reduced to this problem.
This reduction suffers from computational intractability however due to the linear dependence on $|\Pi|$ in the running time.
Since the number of policies can be very large in practice, this poses a major bottleneck in many cases.
We alleviate this intractability issue through a computationally feasible oracle-based algorithm with improved regret bound.

\textbf{Oracle efficient online learning.}
Stemming from the seminal work of~\cite{kalai2005efficient}, there has been a
long line of work investigating the computational barriers and benefits of
online learning in a variety of paradigms. 
Broadly speaking, the bulk of online algorithms are designed on the basis of
two popular frameworks in this literature: follow-the-perturbed-leader
\citep{kalai2005efficient,suggala2020follow,dudik2020oracle,haghtalab2022oracle}
and relax-and-randomize \citep{rakhlin2012relax}. Both
frameworks aim to inject random noise into the input set before calling an
oracle to construct a more robust sequence of actions to be played against an
adversary, but differ in how they introduce such noise to the system. Our
algorithm builds on the relax-and-randomize technique and improves upon the
previous best result of~\cite{syrgkanis2016improved}.
~\\
Despite their computational advantages, it is known that
oracle efficient algorithms have fundamental limits and, in some settings, they may not achieve optimal regret rates \citep{hazan2016computational}.
Whether this is the case for the adversarial contextual bandits problem remains an open problem.

\section{Preliminaries}\label{sec:prelim}
In this section, we explain the notation and problem setup, and
review the notion of relaxation based algorithms in accordance with prior work~\citep{rakhlin2016bistro,syrgkanis2016improved}.
\subsection{Notation and problem setup}
Given an integer $K$, we use
$[K]$ to denote the set $\cbr{1, \dots, K}$
and $a_{1:K}$ to denote
$\cbr{a_1, \dots, a_k}$.
We similarly use $(a, b, c)_{1:K}$ to denote
the set of tuples $\cbr{(a_1, b_1, c_1), \dots, (a_K, b_K, c_K)}$.
The vector of zeros
is denoted as $\mathbf{0}$, and similarly, the vector of ones is denoted
$\mathbf{1}$. 

We consider the contextual bandits problem
with $[T]$ rounds.
In each round $t\in [T]$,
a context $x_t$ 
is shown to the learner,
who chooses an action $\yhat_t \in [K]$,
and incurs a loss of $c_t(\yhat_t)$,
where $c_t \in [0, 1]^k$ denotes
the cost vector. 
The choice of the action $\yhat_t$ can be randomized
and we assume that
the learner samples $\yhat_t$ from some distribution 
$q_t$.
The cost vector is chosen by an adversary
who knows the cost vector $x_t$ and
the distribution $q_t$ but, crucially, does not know
the value of $\yhat_t$.

As in prior work \citep{rakhlin2016bistro,syrgkanis2016improved},
we operate in the \emph{hybrid i.i.d-adversarial} model
where $x_t$ is sampled from some fixed
distribution $\mD$,
and the learner has sampling access to the distribution $\mD$.
We additionally assume that the feedback
to the learner is partial, i.e., the learner
only observes $c_t(\yhat_t)$ and not the full cost vector $c_t$.

The learner's goal is to minimize 
its total cost
compared to a set of policies $\Pi$, where a policy 
is defined as a mapping from contexts to actions.
Formally, the learner aims to minimize its \emph{regret},
which we define as
\begin{align*}
  \Reg :=
  \sum_{t=1}^T \vecdot{q_t}{c_t} - \inf_{\pi \in \Pi} \sum_{t=1}^T c_t(\pi(x_t))
  ,
\end{align*}
where $\vecdot{q_t}{c_t}$ denotes the dot product of $q_t$ and $c_t$, and $\inf$ denotes the infimum.

We assume that the learner has access to
a \emph{value-of-ERM} optimization oracle that
takes as input a sequence of contexts and cost vectors
$(x,c)_{1:t}$,
and
outputs the minimum cost obtainable
by a policy in $\Pi$, i.e.,
\begin{math}
  \inf_{\pi \in \Pi} \sum_{\tau=1}^t c_\tau(\pi(x_\tau))
  .
\end{math}

\subsection{Relaxation Based Algorithms}\label{sec:relaxation_sum}
In each round $t\in [T]$ after
selecting an action and observing the adversarial cost,
the learner obtains an \emph{information tuple},
which we denote by $I_t(x_t, q_t, \yhat_t, c_t(\yhat_t), S_t)$.
Here, $\yhat \sim q_t$ is the action chosen from the learner's distribution,
and $S_t$ is the internal randomness of our algorithm,
which can also be used in the subsequent rounds.

Given the above definition, the notions of \emph{admissible relaxation}
and \emph{admissible strategy} are defined as follows.
\begin{definition}\label{def:admis}
  A partial information relaxation \textsc{Rel}$(\cdot)$ is a mapping from the
  information sequence $(I_1, ..., I_t)$ to a real value for any $t \in [T]$.
  Moreover, a partial-information relaxation is deemed admissible if for any
  such $t$, and for all $I_1, ..., I_{t-1}$:
	\begin{equation}
	\begin{array}{c}
		\Exu{x_t \sim D}{\inf_{q_t}\sup_{c_t}\Exu{\yhat_t \sim q_t, S_t}{c_t(\yhat_{t}) + \Rel(I_{1:t})}} 
		\le \Rel(I_{1:t-1}),
	\end{array}
    \label{admis_ineq1} 
	\end{equation}
	and for all $x_{1:T}, c_{1:T}$ and $q_{1:T}$:
	\begin{equation}
    \Exu{\yhat_{1:T} \sim q_t, S_{1:T}}{\Rel(I_{1:T})} \ge -\inf_{\pi} \sum_{t=1}^{T} c_t(\pi(x_t)). \label{admis_ineq2} 
  \end{equation}
  A randomized strategy $q_{1:T}$ is admissible if it certifies the admissibility conditions (\ref{admis_ineq1}) and (\ref{admis_ineq2}).
\end{definition}

Intuitively, relaxation functions
allow us to decompose the regret across time steps,
and bound each step separately
using Equation \eqref{admis_ineq1}.
The following lemma formalizes this idea.
\begin{lemma}[\citet{rakhlin2016bistro}] \label{lem.relax_reg} Let \textsc{Rel} be an admissible relaxation and $q_{1:T}$ be a corresponding admissible strategy. Then, for any $c_{1:T}$, we have the bound
\begin{align*}
	\Ex{\textsc{Reg}} \leq \textsc{Rel}(\emptyset)
  .
\end{align*}
\end{lemma}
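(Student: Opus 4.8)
The plan is to reduce the claim to a single telescoping inequality and then feed in the two admissibility conditions in sequence. First I would rewrite the expected regret in a form that matches the relaxation. Because the adversary selects $c_t$ knowing $q_t$ but not the realized action $\yhat_t$, the variable $\yhat_t$ is conditionally independent of $c_t$ given $q_t$, so $\Ex{c_t(\yhat_t) \mid q_t, c_t} = \vecdot{q_t}{c_t}$; by the tower rule $\Ex{\sum_t \vecdot{q_t}{c_t}} = \Ex{\sum_t c_t(\yhat_t)}$. Hence the learner cost appearing in $\Reg$ may be replaced by the realized per-round losses $c_t(\yhat_t)$, and it suffices to establish the intermediate bound
\[
\Ex{\sum_{t=1}^T c_t(\yhat_t) + \Rel(I_{1:T})} \le \Rel(\emptyset),
\]
after which I only need to lower-bound $\Ex{\Rel(I_{1:T})}$.

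To prove the intermediate bound I would argue by backward induction (equivalently, telescoping) on $t$. Fix a history $I_{1:t-1}$ and condition on it. Since $q_{1:T}$ is an \emph{admissible strategy}, its choice of $q_t$ certifies \eqref{admis_ineq1}, removing the $\inf_{q_t}$ and giving
\[
\Exu{x_t\sim\mD}{\sup_{c_t}\Exu{\yhat_t\sim q_t, S_t}{c_t(\yhat_t) + \Rel(I_{1:t})}} \le \Rel(I_{1:t-1}).
\]
Any adversary chooses $c_t$ as a function of $(I_{1:t-1}, x_t, q_t)$ only, so its value is dominated by the inner $\sup_{c_t}$; the one-step bound $\Ex{c_t(\yhat_t) + \Rel(I_{1:t}) \mid I_{1:t-1}} \le \Rel(I_{1:t-1})$ therefore holds even for the worst-case adversary. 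Writing $R_t := \Ex{\sum_{\tau=1}^t c_\tau(\yhat_\tau) + \Rel(I_{1:t})}$ and applying the tower rule with this one-step bound yields $R_t \le R_{t-1}$ for every $t$; chaining down to the empty history gives $R_T \le R_0 = \Rel(\emptyset)$, which is exactly the intermediate bound.

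Finally I would invoke \eqref{admis_ineq2}. For every fixed $x_{1:T}, c_{1:T}, q_{1:T}$ it gives $\Exu{\yhat_{1:T}, S_{1:T}}{\Rel(I_{1:T})} \ge -\inf_\pi \sum_t c_t(\pi(x_t))$; taking the outer expectation over the contexts and the (adaptively chosen) costs preserves the inequality, so $\Ex{\Rel(I_{1:T})} \ge -\Ex{\inf_\pi \sum_t c_t(\pi(x_t))}$. Substituting into the intermediate bound and rearranging recovers $\Ex{\sum_t \vecdot{q_t}{c_t}} - \Ex{\inf_\pi \sum_t c_t(\pi(x_t))} \le \Rel(\emptyset)$, i.e.\ $\Ex{\Reg} \le \Rel(\emptyset)$. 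I expect the delicate point to be the measure-theoretic bookkeeping in the second step: justifying that conditioning on $I_{1:t-1}$ and replacing the adaptive adversary's cost by the inner supremum is legitimate, and checking that the expectation over $x_t \sim \mD$ already embedded in \eqref{admis_ineq1} aligns with the conditional expectation used in the telescoping. The conditional independence of $\yhat_t$ from $c_t$ given $q_t$ — the ``adversary does not see $\yhat_t$'' assumption — is what makes the initial rewriting valid and must be handled with care.
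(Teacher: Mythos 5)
Your proof is correct. The paper does not prove this lemma itself (it is imported from \citet{rakhlin2016bistro}), and your argument --- replacing $\vecdot{q_t}{c_t}$ by $\Exu{\yhat_t \sim q_t}{c_t(\yhat_t)}$ using the fact that the adversary does not observe $\yhat_t$, telescoping the one-step admissibility condition \eqref{admis_ineq1} backward from $t=T$ to $t=0$, and then invoking the final-step condition \eqref{admis_ineq2} to lower-bound $\Ex{\Rel(I_{1:T})}$ --- is precisely the standard proof of this result in the relax-and-randomize framework.
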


\section{Contextual Bandits Algorithm}\label{sec:algo}
We here define an admissible strategy in correspondence with the relaxation
notion from the prior section, and use it to outline our contextual bandits
algorithm.
As mentioned in Section \ref{sec:our_cont},
our algorithm is based on the \bistplus{} 
algorithm of \cite{syrgkanis2016improved},
and our improvement is obtained
by defining a new relaxation 
function, which we discuss below.
We discuss how this improves the regret bound in Section \ref{sec:analysis}.

\textbf{Unbiased cost vectors.}
In order
to handle the partial feedback nature of the problem,
we use the standard technique of forming an unbiased cost vector
from the observed entry,
together with the discretization scheme
of~\cite{syrgkanis2016improved}.
Let $\gamma < 1$ be a parameter
to be specified later.
Using the information
$I_t$ collected on round  $t$, we set our estimator to be the random vector
whose elements are defined by a Bernoulli random variable
\begin{equation}
	\chat_t(i) = 
	\begin{cases}
        K\gamma^{-1} \cdot \ind{i = \yhat_t} & \text{with probability } \gamma \cdot \frac{c_t(\yhat_t)}{Kq_t(\yhat_t)}\\
        0 & \text{otherwise }
    \end{cases}. \label{eq.un_est}
\end{equation}
We note that this is only defined for $\min_i q_t(i) \geq \gamma/K$, thus imposing a constraint that must be ensured by our algorithm.
It is easy to verify that this vector is indeed an unbiased estimator:
\begin{align*}
	\Exu{\yhat_t \sim q_t}{\chat_t(i)} =
  q_t(i)\cdot 
  \gamma \frac{c_t(i)}{Kq_t(i)}
  \cdot
  K\gamma^{-1} =
  c_t(i).
\end{align*}

\textbf{Relaxation function.}
We first construct a \rade{} by randomly
sampling an action $i \in [K]$ and setting $\eps_t(j) = 0$ for $i \ne j$ and
$\eps_t(i)$ to a Rademacher random variable in $\{-1,1\}$.
We additionally define $Z_t \in \{0, K\gamma^{-1}\}$ that takes value
$K\gamma^{-1}$ with probability $\gamma$ and 0 otherwise.
Using the notation
$\rho_t$ for the random variable tuple $(x,\eps,Z)_{t+1:T}$, we define our
relaxation $\Rel$ as
\begin{equation}
	\Rel(I_{1:t}) = \Exu{\rho_t}{R((x,\chat_t)_{1:t},\rho_t)} \label{eq.relax}
  ,
\end{equation}
where $R((x,\chat_t)_{1:t},\rho_t)$ is defined to be
\begin{equation*}
	\gamma(T-t) - \inf_{\pi} \paranth{
    \sum_{\tau=1}^{t} \chat(\pi(x_{\tau}))
    + \sum_{\tau=t+1}^{T} 2Z_{\tau} \eps_\tau(\pi(x_\tau))
    }.
\end{equation*}
We note the contrast between the above definition and
the relaxation frameworks used in prior work~\citep{rakhlin2015hierarchies,rakhlin2016bistro,syrgkanis2016improved}:
These works all set every entry in $\eps_t$ 
to a Rademacher random variables, while we
set only a single (randomly chosen) entry to a Rademacher random variable
and set the rest of the entries to zero.

The changes in the relaxtion function
are motivated by the algorithm analysis (see Section~\ref{sec:analysis}).
Specifically, in order to ensure admissibility,
  the symmetrization step of the Relax and randomize framework
  applies only to a single (random) action.
  Applying noise to all the entries, as is done in prior work, leads to valid upper bound
  but is not tight.
  As we show in Lemma~\ref{lm:admis_new},
  applying noise to a single entry is sufficient, as long
  as this entry is chosen uniformly at random. 
  The reduced noise leads to an improved Rademacher averages term
  (see Theorem~\ref{lm:rade}), which in turn leads to a better regret bound.

\textbf{Randomized strategy.} 
As in prior work~\citep{rakhlin2015hierarchies,rakhlin2016bistro,syrgkanis2016improved},
we use the ``random playout'' technique
to define our strategy.
We use hallucinated
future cost vectors,
together with unbiased estimates of the past cost,
to choose a strategy that minimizes the total cost across
$T$ rounds.

Define $D := \{K\gamma^{-1} \cdot \bold{e_i} : i \in [K]\} \cup \{ \bold{0} \}$,
where $\bold{e_i}$ is the $i$-th standard basis vector in $K$ dimensions.
We further define $\Delta_D$, the set of distributions over $D$, and $\Delta'_D \subseteq \Delta_D$ to be the set 
\begin{align}
	\{p \in \Delta_D : \max_{i \in [K]} p(i) \leq \frac{\gamma}{K}\}.
  \label{eq:delta_prim}
\end{align}
Recall that $\rho_t$ denotes the random variable tuple
$(x, \eps, Z)_{t+1:T}$.
We sample $\rho_t$ and
define $q_t^*(\rho_t)$ as:
\begin{align}
    q^*_t(\rho_t) := \min_{q \in \Delta_K} \sup_{p_t \in \Delta'_D} \Exu{\chat_t \sim p_t}{\vecdot{q}{\chat_t} + R((x,\chat)_{1:t},\rho_t)} 
    .
    \label{eq.qopt}
\end{align}
We than sample the action $\yhat_t$ from the distribution $q_t(\rho_t)$ defined
as
\begin{align}
		q_t(\rho_t) := 
    (1-\gamma) q_t^*(\rho_t) + \frac{\gamma}{K} \cdot \bold{1}
    .
    \label{eq.qt}
\end{align}
In order to calculate $q_t(\rho_t)$, we 
use a water-filling argument similar to~\cite{rakhlin2016bistro} and \cite{syrgkanis2016improved}.
Formally, we will use the following lemma,
the proof of which is in Appendix~\ref{app:effic}.
\begin{lemma} \label{lem.effic}
  There exists a water-filling algorithm 
  that computes the value $q_t^*(\rho_t)$ for any given $\rho_t$ in time $O(K)$
  with only $K+1$ accesses to a value-of-ERM oracle in every round.
\end{lemma}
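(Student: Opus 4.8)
The plan is to evaluate the inner supremum in \eqref{eq.qopt} in closed form, which collapses the saddle-point problem into a one-dimensional water-filling over $\Delta_K$. Write $R_i$ for the value of $R((x,\chat)_{1:t},\rho_t)$ obtained by instantiating $\chat_t = K\gamma^{-1}\mathbf{e}_i$ (for $i \in [K]$), and $R_0$ for the value obtained by setting $\chat_t = \mathbf{0}$; these correspond exactly to the atoms of $D$. Since a distribution $p_t \in \Delta_D$ assigns mass $p_t(i)$ to $K\gamma^{-1}\mathbf{e}_i$ and the remainder to $\mathbf{0}$, we have $\Exu{\chat_t \sim p_t}{\vecdot{q}{\chat_t} + R} = R_0 + \sum_{i=1}^K p_t(i)\paranth{K\gamma^{-1} q(i) + R_i - R_0}$. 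The crucial observation is that the cap defining $\Delta'_D$ in \eqref{eq:delta_prim} makes the normalization slack: since $\sum_{i} p_t(i) \le K \cdot \frac{\gamma}{K} = \gamma < 1$, the mass $p_t(\mathbf 0)$ stays nonnegative automatically, so only the box constraints $0 \le p_t(i) \le \gamma/K$ remain and the (linear) supremum decouples across coordinates. The optimal $p_t$ therefore places weight $\gamma/K$ on every coordinate with a positive bracket and $0$ elsewhere, yielding $\sup_{p_t \in \Delta'_D}\Exu{\chat_t \sim p_t}{\,\cdot\,} = R_0 + \sum_{i=1}^K \max\cbr{q(i) + b_i,\, 0}$, where $b_i := \frac{\gamma}{K}(R_i - R_0)$ and we used $\frac{\gamma}{K}\cdot K\gamma^{-1} = 1$.

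Computing the $R_i$ and $R_0$ is where the oracle enters: each infimum over $\pi$ appearing in $R$ is, by definition, a single query to the value-of-ERM oracle on the augmented sequence formed by $(x,\chat)_{1:t-1}$, the instantiated last cost $\chat_t$, and the hallucinated pairs $(x_\tau, 2Z_\tau\eps_\tau)_{t+1:T}$. This yields exactly $K+1$ oracle calls, one per atom of $D$, after which all $b_i$ are available in $O(K)$ additional arithmetic. It then remains to solve $q_t^*(\rho_t) = \argmin_{q \in \Delta_K}\sum_{i=1}^K \max\cbr{q(i)+b_i,\,0}$, a separable convex program over the simplex.

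I would solve this last step by the standard water-filling characterization: the minimizer has the thresholded form $q(i) = \max\cbr{\theta - b_i,\, 0}$, where the water level $\theta$ is the unique scalar satisfying $\sum_{i=1}^K \max\cbr{\theta - b_i,\,0} = 1$. Optimality follows from a subgradient/KKT argument: the subgradient of the objective with respect to $q(i)$ is $\ind{q(i)+b_i>0} \in \{0,1\}$, so an optimal solution pours mass first into the ``free'' coordinates (those that can be kept at $q(i)+b_i \le 0$) and only afterward raises all active coordinates to a common level, which is precisely the thresholded form. Because $\theta$ is pinned down by a monotone, piecewise-linear equation whose breakpoints are the $b_i$, it can be located in $O(K)$ time via linear-time selection rather than a full sort, and $q_t^*(\rho_t)$ is then read off in $O(K)$; combined with the $K+1$ oracle calls this gives the claimed complexity.

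The main obstacle is the first step: rigorously justifying that the inner supremum decouples and admits the clean closed form. This hinges entirely on the cap $\max_i p_t(i) \le \gamma/K$ rendering the constraint $p_t(\mathbf 0)\ge 0$ inactive; without this cap the maximization would couple the coordinates through $\sum_i p_t(i) \le 1$ and the reduction to scalar water-filling would break. A secondary care point is the non-smoothness of $\max\cbr{q(i)+b_i,\,0}$ at its kink, which must be handled through subgradients when certifying that the thresholded $q$ is a global minimizer, and confirming that the water level can be computed in linear rather than $O(K\log K)$ time.
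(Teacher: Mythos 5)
Your proof is correct and follows essentially the same route as the paper's: instantiate the $K+1$ atoms of $D$ with one value-of-ERM call each, observe that the cap $p_t(i)\le \gamma/K$ makes the normalization constraint slack so the inner supremum decouples into per-coordinate box maximizations, and reduce to minimizing the separable piecewise-linear objective $\sum_{i}\left(q(i)-\eta_i\right)^{+}$ over the simplex by water-filling. The only (immaterial) difference is the parametrization of the final minimizer: the paper greedily sets $q(i)=\min\{\eta_i^{+},m\}$ and distributes any leftover mass arbitrarily, whereas you use the common-water-level form $q(i)=(\theta-b_i)^{+}$ with $\theta$ found by linear-time selection; both are optimal points of the same program and both meet the $O(K)$ time and $K+1$ oracle-call budget.
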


A full pseudocode of our approach is provided in Algorithm \ref{alg:bandit}.
\begin{algorithm}[ht]
  \SetAlgoLined
  \For{$t = 1, 2, \ldots, T$}{
  	Observe context $x_t$ \\
    Draw random variable tuple $\rho_t = (x,\eps,Z)_{t+1:T}$ \\ 
  	Compute $q_t(\rho_t)$ via Equation \ref{eq.qt}\\
  	Draw action $\yhat_t \sim q_t(\rho_t)$ and observe $c_t(\yhat_t)$ \\
  	Estimate cost vector $\chat_t$ via Equation \ref{eq.un_est}
  }
 \caption{Contextual Bandits Algorithm}\label{alg:bandit}
\end{algorithm}

\section{Analysis}\label{sec:analysis}
In this section,
we provide the formal statement of our theoretical guarantees and discuss their proofs.
Due to space constraints, some of the proofs are deferred to the supplementary material.

As mentioned in the introduction, our main novelty is the
use of a new relaxation function, which we discussed in Section \ref{sec:algo},
that uses less variance in the hallucinated cost vectors.
Our initial result verifies that the our novel relaxation is indeed
admissible and, as a result, we can leverage the prior work demonstrating the
expected regret of these algorithms.

\begin{theorem}\label{thm:admissible}
  The relaxation function defined in \eqref{eq.relax}, and the corresponding strategy \eqref{eq.qt} are admissible (Definition \ref{def:admis}).
\end{theorem}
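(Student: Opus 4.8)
The plan is to verify the two requirements of Definition~\ref{def:admis} separately, disposing of the terminal condition \eqref{admis_ineq2} first. At $t=T$ the future tuple $\rho_T$ is empty and the hallucination sum vanishes, so $\Rel(I_{1:T}) = -\inf_\pi \sum_{\tau=1}^T \chat_\tau(\pi(x_\tau))$. Since $v \mapsto -\inf_\pi v$ is convex, Jensen's inequality gives $\Ex{-\inf_\pi \sum_\tau \chat_\tau(\pi(x_\tau))} \ge -\inf_\pi \Ex{\sum_\tau \chat_\tau(\pi(x_\tau))}$, and the claim follows because $\chat_\tau$ is unbiased for $c_\tau$ (as verified after \eqref{eq.un_est}), so the inner expectation is $\sum_\tau c_\tau(\pi(x_\tau))$.

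The substance is the recursive condition \eqref{admis_ineq1}. I would upper bound the outer $\inf_{q_t}$ by plugging in our randomized strategy (sample $\rho_t$, play $q_t(\rho_t)$). A crucial simplification is that by \eqref{eq.un_est} the law of $\chat_t$ given $c_t$ is the distribution $p_t(c_t)\in\Delta'_D$ with $p_t(K\gamma^{-1}\mathbf{e}_i)=\gamma c_t(i)/K$, which does \emph{not} depend on $q_t$. This lets me separate the incurred cost from the relaxation term, and then $\sup\Ex\le\Ex\sup$ pushes everything inside the expectation over $\rho_t$, reducing the goal to a pointwise bound for each fixed $x_t$ and $\rho_t$ on $\sup_{c_t}\paranth{\vecdot{q_t(\rho_t)}{c_t}+\Exu{\chat_t\sim p_t(c_t)}{R((x,\chat)_{1:t},\rho_t)}}$. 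Using $q_t(\rho_t)=(1-\gamma)q_t^*(\rho_t)+\tfrac{\gamma}{K}\mathbf{1}$ and $\sum_i c_t(i)\le K$ gives $\vecdot{q_t(\rho_t)}{c_t}\le\vecdot{q_t^*(\rho_t)}{c_t}+\gamma$, where the extra $\gamma$ exactly matches the increment $\gamma(T-t+1)-\gamma(T-t)$ in the relaxation. Rewriting $\vecdot{q_t^*}{c_t}=\Exu{\chat_t}{\vecdot{q_t^*}{\chat_t}}$ by unbiasedness and $\sup_{c_t}=\sup_{p_t\in\Delta'_D}$, the optimality of $q_t^*(\rho_t)$ from \eqref{eq.qopt} together with Sion's minimax theorem (the objective is bilinear in $(q,p_t)$ over the convex compact sets $\Delta_K,\Delta'_D$) swaps $\min_q\sup_{p_t}$ into $\sup_{p_t}\min_q$, and the inner minimum evaluates to $\min_i\Exu{p_t}{\chat_t(i)}=\min_i c_t(i)$.

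What remains is the symmetrization, and this is the heart of the argument. Writing $g(v):=-\inf_\pi\paranth{A(\pi)+v(\pi(x_t))}$, where $A(\pi):=\sum_{\tau<t}\chat_\tau(\pi(x_\tau))+\sum_{\tau>t}2Z_\tau\eps_\tau(\pi(x_\tau))$ collects past estimates and future hallucinations, I must show
\[
  \min_i c_t(i) + \Exu{\chat_t\sim p_t}{g(\chat_t)} \;\le\; \Exu{Z_t,\eps_t}{g(2Z_t\eps_t)}.
\]
Here it is convenient that $g(v)=\max_a\paranth{\Phi(a)-v(a)}$ with $\Phi(a):=\sup_{\pi:\pi(x_t)=a}(-A(\pi))$. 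I would introduce a ghost copy $\chat_t'\sim p_t$, use Jensen to pull out $\Exu{\chat_t'}{\cdot}$, and a Rademacher sign $\sigma$ swapping $\chat_t\leftrightarrow\chat_t'$ together with subadditivity of $\max$; this produces the factor $2$ (the source of the $2Z_t$ in the relaxation) and yields $\Exu{\chat_t}{g(\chat_t)}\le\Exu{\chat_t,\sigma}{\max_a(\Phi(a)-c_t(a)+2\sigma\chat_t(a))}$. Since $\chat_t$ is supported on a \emph{single} coordinate, $2\sigma\chat_t$ is a single-action $\pm 2K\gamma^{-1}$ spike landing on action $j$ with probability $\gamma c_t(j)/K\le\gamma/K$, whereas $2Z_t\eps_t$ is exactly such a spike placed on a uniformly random action with probability $\gamma/K$ each; because a mean-zero spike only \emph{increases} the expected maximum (Jensen again), the smaller per-action probabilities of the real estimator are dominated by the hallucination. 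Finally the leftover $\min_i c_t(i)$ is absorbed by the bound $\Phi(a)-c_t(a)\le\Phi(a)-\min_i c_t(i)$ inside the maximum. Integrating the pointwise inequality over $\rho_t$ and $x_t\sim\mD$, and matching the real context $x_t$ with the hallucinated context appearing in $\Rel(I_{1:t-1})$ — legitimate precisely because both are i.i.d.\ from $\mD$ — recovers $\Rel(I_{1:t-1})$.

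I expect the symmetrization/domination step to be the main obstacle: one must verify that a single randomly placed Rademacher spike dominates the symmetrized estimator (this is exactly where the improvement over the all-coordinate hallucination of prior work originates), while simultaneously absorbing the exploration term $\min_i c_t(i)$ and making the spike magnitude $2K\gamma^{-1}$ and probability $\gamma/K$ line up on both sides.
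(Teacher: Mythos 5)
Your proposal is correct and follows essentially the same route as the paper: the terminal condition via Jensen plus unbiasedness of $\chat_t$, and the recursive condition by plugging in the mixed strategy (the $\gamma$ from the uniform exploration matching the decrement in $\gamma(T-t+1)$), passing to distributions in $\Delta'_D$, a minimax swap, ghost-sample symmetrization producing the factor $2$, and finally the monotonicity/Jensen argument showing the single randomly placed Rademacher spike with per-coordinate probability $\gamma/K$ dominates the symmetrized estimator — which is exactly the paper's Lemma~\ref{lm:admis_new}. Only the bookkeeping of the exploration bonus $\min_i c_t(i)$ differs cosmetically from the paper's version (which absorbs it via $\min_i \Exu{\chat_t'}{\chat_t'(i)} \le \Exu{\chat_t'}{\chat_t'(\pi(x_t))}$ before symmetrizing), and both variants are valid.
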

Theorem \ref{thm:admissible} contrasts with existing admissible relaxations
in that it only uses a single Rademacher
variable for each time step,
while prior work
-- Lemma 2 in \cite{rakhlin2015hierarchies}, Theorem 2 in \cite{rakhlin2016bistro}
and Theorem 3 in \cite{syrgkanis2016improved} --
all use $k$ independent Rademacher variables.
To our knowledge, this is the first work in which
the number of Rademacher variables used
in the relaxation does not grow with the number of arms.
As we discuss below,
this allows us to reduce the variance of the hallucinated costs, leading to a better
regret bound.
The proof of Theorem \ref{thm:admissible} is provided in Section \ref{sec:proof_admis},
and is based on a novel symmetrization step (Lemma \ref{lm:admis_new}), which may be of independent interest.

As highlighted in Section \ref{sec:relaxation_sum}, admissible relaxations are
a powerful framework for upper bounding the expected regret in online learning
through Lemma \ref{lem.relax_reg} and the value of \Rel$(\emptyset)$.
Formally, Lemma \ref{lem.relax_reg} implies
\begin{align}
  \Ex{\Reg}
  \le 
	\Rel(\emptyset) = \gamma T + \Exu{\rho_0}{\sup_{\pi \in \Pi} \left( \sum_{\tau = 1}^T 2 Z_\tau \eps_\tau(\pi(x_\tau)) \right)}.
  \label{eq:may14_1331}
\end{align}
In order to bound the regret of our algorithm, it suffices to bound the Rademacher averages term above,
which we formally do in the following Theorem.
\begin{theorem} \label{lm:rade}
  For any $ \gamma > 
  \frac{K}{T}\log(|\Pi|)/2$, the following holds:
  \begin{align*}
    \Exu{(Z,\eps)_{1:T}}{\sup_{\pi \in \Pi}\sum_{i=1}^T Z_t\eps_t(\pi(x_t))} \le 2\sqrt{\frac{KT \log{|\Pi|}}{\gamma}} .
  \end{align*}
\end{theorem}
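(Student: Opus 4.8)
The plan is to treat the quantity inside the supremum, for each fixed policy $\pi$, as a sum of independent mean-zero bounded random variables, and then apply a standard exponential maximal inequality (a moment-generating-function union bound) over the finite class $\Pi$. First I would fix the contexts $x_{1:T}$ (the bound is uniform in them, so this is without loss of generality) and set, for each round $t$, the scalar $W_t := Z_t \eps_t(\pi(x_t))$. Inspecting the laws of $Z_t$ and of the one-hot vector $\eps_t$ shows that $W_t$ is symmetric about $0$, equals $\pm K\gamma^{-1}$ only on the event that both $Z_t \ne 0$ and the randomly chosen coordinate equals $\pi(x_t)$ --- an event of probability exactly $\gamma/K$ --- and is $0$ otherwise. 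Hence $\Ex{W_t} = 0$ and $\Ex{W_t^2} = \tfrac{\gamma}{K}(K\gamma^{-1})^2 = K/\gamma$. The crucial point, which is also the source of the improvement over prior work, is that the one-hot structure makes the coordinate $\pi(x_t)$ active only with probability $1/K$, shrinking the per-round variance by a factor of $K$ relative to relaxations that place an independent Rademacher on every coordinate.

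Next I would exploit independence of the $W_t$ across $t$. For any $\lambda > 0$ and fixed $\pi$, the moment generating function factorizes as
\[
  \Exu{(Z,\eps)_{1:T}}{\exp\rbr{\lambda \textstyle\sum_{t=1}^T W_t}} = \prod_{t=1}^T \rbr{1 + \tfrac{\gamma}{K}\rbr{\cosh(\lambda K\gamma^{-1}) - 1}},
\]
and applying $1 + u \le e^u$ termwise bounds this by $\exp\rbr{T\tfrac{\gamma}{K}(\cosh(\lambda K\gamma^{-1}) - 1)}$. Combining Jensen's inequality with a union bound over $\Pi$ in the form $\Ex{\sup_\pi \sum_t W_t} \le \tfrac{1}{\lambda}\log\rbr{\abs{\Pi}\max_\pi \Ex{e^{\lambda\sum_t W_t}}}$ reduces the task to minimizing $\tfrac{\log\abs{\Pi}}{\lambda} + \tfrac{T\gamma}{K\lambda}\rbr{\cosh(\lambda K\gamma^{-1}) - 1}$ over $\lambda$.

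The heart of the argument is controlling the $\cosh$ term. The elementary inequality $\cosh(u) - 1 \le u^2$, valid for $\abs{u}\le\sqrt{2}$, converts the variance-like quantity into the true variance: with $u = \lambda K\gamma^{-1}$ it gives the sub-Gaussian-type bound $\Ex{e^{\lambda\sum_t W_t}} \le \exp\rbr{\lambda^2 TK/\gamma}$, so the objective becomes $\tfrac{\log\abs{\Pi}}{\lambda} + \lambda\tfrac{TK}{\gamma}$, and the choice $\lambda^\star = \sqrt{\gamma\log\abs{\Pi}/(TK)}$ yields exactly $2\sqrt{TK\log\abs{\Pi}/\gamma}$. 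The main obstacle --- and precisely the reason the hypothesis $\gamma > \tfrac{K}{T}\log\abs{\Pi}/2$ is required --- is to check that this optimizer stays in the regime where the $\cosh$ inequality holds: one verifies $\lambda^\star K\gamma^{-1} = \sqrt{K\log\abs{\Pi}/(T\gamma)} < \sqrt{2}$, which is exactly the stated lower bound on $\gamma$. Outside this regime the range $K\gamma^{-1}$ of each $W_t$ dominates its standard deviation and the sub-Gaussian bound fails, so keeping $\lambda^\star K\gamma^{-1}$ bounded is the delicate step that ties the constant in the bound to the constraint on $\gamma$.
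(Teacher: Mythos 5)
Your proposal is correct and follows essentially the same route as the paper's proof: a moment-generating-function/soft-max bound with a union over $\Pi$, factorization over rounds, a quadratic bound on the per-round MGF (your $\cosh(u)-1\le u^2$ for $|u|\le\sqrt{2}$ packages the paper's chain $\cosh(x)\le e^{x^2/2}$, $1+x\le e^x$, $e^x-1\le 2x$ into one step), and the same optimizer $\lambda^\star=\sqrt{\gamma\log|\Pi|/(TK)}$ whose admissibility is exactly the hypothesis on $\gamma$. The only cosmetic difference is that you treat $(Z_t,\eps_t)$ jointly as a single three-valued variable rather than conditioning on $Z_{1:T}$ first; both yield the identical bound.
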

The above theorem can be thought of as an improved version of Lemma 2 from
\cite{syrgkanis2016improved},
where we improve by a factor of $K$.
Our improvement comes from the use of the
new Rademacher vectors that only contain a single non-zero coordinate,
together with a more refined analysis.
We refer to Section \ref{sec:proof_rad} for a formal proof of the result.

Combining Lemma \ref{lem.effic}, Equation \eqref{eq:may14_1331}, and Theorem \ref{lm:rade},
we obtain the main result of our paper which we state here.

\begin{theorem}\label{thm.admis_regret}
  The contextual bandits algorithm implemented in Algorithm \ref{alg:bandit}
  has expected regret upper bounded by
  \begin{align*}
    4\sqrt{\frac{TK \log (|\Pi|)}{\gamma}} + \gamma T,	
  \end{align*}
  for any $\frac{K\log|\Pi|}{2T} < \gamma \le 1$, which implies the regret order of $O((K \log{|\Pi|})^{1/3}T^{2/3})$ when $T > 4K\log(|\Pi|).$
  Furthermore, the Algorithm makes at most $K+1$ calls to a value-of-ERM oracle in each round.
\end{theorem}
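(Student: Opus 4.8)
The plan is to assemble the theorem from the ingredients already established, since the bulk of the technical work—admissibility and the sharpened Rademacher bound—has been carried out in Theorems \ref{thm:admissible} and \ref{lm:rade}. First I would invoke Theorem \ref{thm:admissible} to certify that the relaxation \eqref{eq.relax} together with the strategy \eqref{eq.qt} is admissible, so that Lemma \ref{lem.relax_reg} applies and yields $\Ex{\Reg} \le \Rel(\emptyset)$. I would then use the evaluation of $\Rel(\emptyset)$ recorded in Equation \eqref{eq:may14_1331}, namely $\Rel(\emptyset) = \gamma T + \Exu{\rho_0}{\sup_{\pi} \sum_{\tau=1}^T 2 Z_\tau \eps_\tau(\pi(x_\tau))}$, and pull out the factor of $2$ so that the expectation matches exactly the quantity controlled by Theorem \ref{lm:rade}.

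The next step is to bound this Rademacher-averages term. Since the bound in Theorem \ref{lm:rade} is uniform in the contexts $x_{1:T}$, taking the additional outer expectation over $x$ (the remaining component of $\rho_0$) preserves it, giving $\Exu{\rho_0}{\sup_\pi \sum 2Z_\tau \eps_\tau(\pi(x_\tau))} \le 4\sqrt{KT\log|\Pi|/\gamma}$ whenever $\gamma > K\log|\Pi|/(2T)$. Combining this with the $\gamma T$ term produces the claimed bound $\Ex{\Reg} \le 4\sqrt{TK\log|\Pi|/\gamma} + \gamma T$, valid on the stated range $K\log|\Pi|/(2T) < \gamma \le 1$; the upper endpoint $\gamma \le 1$ is required so that the mixing weight in \eqref{eq.qt} and the estimator \eqref{eq.un_est} remain well-defined.

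To extract the $O((K\log|\Pi|)^{1/3}T^{2/3})$ rate I would optimize the free parameter $\gamma$. Treating $f(\gamma) = \gamma T + 4\sqrt{TK\log|\Pi|}\,\gamma^{-1/2}$ and balancing the two summands suggests $\gamma = \Theta\big((K\log|\Pi|/T)^{1/3}\big)$; substituting this value makes both terms of order $(K\log|\Pi|)^{1/3}T^{2/3}$. The point requiring care is confirming that this choice lies in the admissible window: when $T > 4K\log|\Pi|$ one has $\gamma < 1$, and since $(K\log|\Pi|/T)^{1/3}$ dominates $K\log|\Pi|/(2T)$ in this regime, the lower constraint $\gamma > K\log|\Pi|/(2T)$ is also met, so Theorem \ref{lm:rade} is genuinely applicable at the chosen $\gamma$.

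Finally, the oracle-complexity claim follows immediately from Lemma \ref{lem.effic}: each round of Algorithm \ref{alg:bandit} needs only $q_t(\rho_t)$, which by \eqref{eq.qt} is a deterministic mixture of $q_t^*(\rho_t)$ with the uniform distribution, and Lemma \ref{lem.effic} computes $q_t^*(\rho_t)$ using $K+1$ calls to the value-of-ERM oracle. I expect no genuine obstacle here, as this theorem is essentially a plumbing argument; the only things to stay vigilant about are the bookkeeping of the constant factor $2$ emerging from the relaxation and the consistency of the chosen $\gamma$-range with the hypotheses of Theorem \ref{lm:rade}.
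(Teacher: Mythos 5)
Your proposal is correct and follows essentially the same route as the paper's own proof: combine Equation \eqref{eq:may14_1331} with Theorem \ref{lm:rade} (picking up the factor of $2$), optimize $\gamma = \Theta((K\log|\Pi|/T)^{1/3})$, verify both endpoints of the admissible $\gamma$-window under $T > 4K\log|\Pi|$, and cite Lemma \ref{lem.effic} for the oracle count. Your added remarks on the uniformity of the Rademacher bound in $x_{1:T}$ and the explicit tracking of the constant $2$ are correct bookkeeping that the paper leaves implicit.
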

We refer to
Appendix~\ref{app:main_thm}
for the proof of this result.
\eat{
We provide a sketch of the proof here and
refer to the supplementary material
for the full calculations.
\begin{proof}
  Combining Equation \ref{eq:may14_1331} and Theorem \ref{lm:rade} we obtain
  \begin{math}
  \Reg_T \le 4\sqrt{TK\log(|\Pi|)/\gamma} + \gamma T.
  \end{math}
  Optimizing over $\gamma$, we obtain
  $\gamma = \paranth{4K\log|\Pi|/T}^{1/3}.$
  Note however that $\gamma$ needs to satisfy
  $\gamma > KT^{-1}\log(|\Pi|)/2$ and
  $\gamma \le 1$.
  Using some algebraic manipulations,
  we can show that the first condition is equivalent to
  \begin{math}
    T > K\log |\Pi|/4\sqrt{2},
  \end{math}
  and therefore holds by assumption.
  To verify the second condition of $\gamma \le 1$, we need to have $T > 4K\log(|\Pi|)$,
  which again holds by assumption.

  Plugging $\gamma$ to the regret bound, we obtain
  \begin{math}
     \mO(T^{2/3} (K\log|\Pi|)^{1/3}),
  \end{math}
  completing the proof.
\end{proof}
}

\subsection{Proof of Theorem~\ref{thm:admissible}}
\label{sec:proof_admis}
In order to
prove Theorem~\ref{thm:admissible},
we need to verify the final step condition \eqref{admis_ineq2},
and show that
the 
$q_t$
defined in Equation \eqref{eq.qt}
certifies the condition \eqref{admis_ineq1},
i.e.,
\begin{align}
  \Exu{x_t}{
    \sup_{c_t}
    \Exu{\yhat_t,  S_t}{c_t(\yhat_t) + \Rel(I_{1:t})}
  }
  \le \Rel{}(I_{1:t-1}),
  \label{eq:admis_q_I}
\end{align}
where $\yhat_t$ is sampled from $q_t$ and
$I_{1:t}$ denotes
$(I_{1:t-1}, I_t(x_t, q_t, \yhat_t, c_t, S_t))$.
Verifying condition \eqref{admis_ineq2} is standard
and we do this in Appendix~\ref{app:final}.
It remains to prove Equation \eqref{eq:admis_q_I}.
Since most admissibility proofs
in the literature \citep{rakhlin2012relax,rakhlin2015hierarchies, rakhlin2016bistro, syrgkanis2016improved}
follow the framework of
the original derivation of \cite{rakhlin2012relax},
in order to emphasize our novelty,
we divide the proof into two parts.
The first part (Lemma \ref{lm:admis_prior})
is based
on existing techniques (in particular, the proof of Theorem 3 in \cite{syrgkanis2016improved}) and its proof is
provided in Appendic~\ref{app:admis_prior}.
The second part (Lemma \ref{lm:admis_new}) uses
new techniques and 
we present its proof here.

\begin{lemma}\label{lm:admis_prior}
  For any $t\in [T]$, define $A_{\pi, t}$ and
  $C_{t}$ as
  \begin{align*}
    A_{\pi, t} := -\sum_{\tau=1}^{t-1} \chat_\tau(\pi(x_\tau)) - \sum_{\tau=t+1}^{T} 2 Z_\tau \eps_\tau(\pi(x_\tau)),
    \quad
    C_t := \gamma (T-t + 1)
    .
  \end{align*}
  Letting $\delta$ denote a Rademacher random variable independent of
  $\rho_t$ and $\chat_t$,
  the following holds for any value of $x_t$:
  \begin{align*}
    \sup_{c_t}
    \Exu{\yhat_t,  S_t}{c_t(\yhat_t) + \Rel(I_{1:t})}
    \le 
    \Exu{\rho_t}{
      \sup_{p_t \in \Delta'_D}
      \Exu{\chat_t \sim p_t, \delta}{
        \sup_{\pi \in \Pi}\paranth{2\delta\chat_t(\pi(x_t)) + A_{\pi, t}}
      }
    }
    + C_t
    .
  \end{align*}
\end{lemma}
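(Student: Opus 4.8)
The plan is to rewrite the left-hand side using the shorthand $A_{\pi,t}$, collapse the learner's and adversary's randomness into a worst-case distribution over $\Delta'_D$, and then close with the standard minimax-plus-symmetrization argument. First I would record two structural facts about the estimator $\chat_t$. Writing $p_t^{c_t}$ for the marginal law of $\chat_t$ induced by drawing $\yhat_t\sim q_t$ and then the Bernoulli of \eqref{eq.un_est}, a direct computation gives $p_t^{c_t}(K\gamma^{-1}\mathbf{e}_i)=q_t(i)\cdot\gamma\tfrac{c_t(i)}{Kq_t(i)}=\gamma c_t(i)/K$, so that (i) $p_t^{c_t}$ does \emph{not} depend on the sampling distribution $q_t$, (ii) $p_t^{c_t}\in\Delta'_D$ since $c_t(i)\le 1$ forces $\max_i p_t^{c_t}(i)\le\gamma/K$, and (iii) $\chat_t$ is unbiased, $\Exu{\chat_t\sim p_t^{c_t}}{\chat_t}=c_t$. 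Fact (i) is precisely what legitimizes the random-playout decoupling below, and it is available because the uniform mixing $\tfrac{\gamma}{K}\mathbf 1$ in \eqref{eq.qt} enforces $q_t(i)\ge\gamma/K$, keeping the estimator well defined.

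Next I would unfold the relaxation. Using the definition of $A_{\pi,t}$ and $-\inf_\pi(\cdot)=\sup_\pi(-\cdot)$, one has $\Rel(I_{1:t})=\gamma(T-t)+\Exu{\rho_t}{\sup_\pi\paranth{A_{\pi,t}-\chat_t(\pi(x_t))}}$, where the $\rho_t$ inside $\Rel$ is an independent copy of the playout used to form $q_t$. Taking $\Exu{\yhat_t,S_t}{\cdot}$, the term $\Exu{\yhat_t,S_t}{c_t(\yhat_t)}=\Exu{\rho_t}{\vecdot{q_t(\rho_t)}{c_t}}$, while by Fact (i) the relaxation term depends on the playout only through the law $p_t^{c_t}$ of $\chat_t$ and so is independent of it; linearity of expectation then merges both under a single $\Exu{\rho_t}{\cdot}$. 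Splitting $q_t=(1-\gamma)q^*_t+\tfrac{\gamma}{K}\mathbf 1$ and using $\vecdot{\tfrac{\gamma}{K}\mathbf 1}{c_t}\le\gamma$ together with $\vecdot{q^*_t}{c_t}\ge 0$ absorbs exploration into the constant, turning $\gamma(T-t)+\gamma$ into $C_t$ and leaving $\vecdot{q^*_t(\rho_t)}{c_t}$. Rewriting this via Fact (iii) as $\Exu{\chat_t\sim p_t^{c_t}}{\vecdot{q^*_t(\rho_t)}{\chat_t}}$, the only remaining dependence on $c_t$ is through $p_t^{c_t}\in\Delta'_D$, so I can bound $\sup_{c_t}$ by $\sup_{p_t\in\Delta'_D}$ and push it inside $\Exu{\rho_t}{\cdot}$ (sup of an expectation is at most the expectation of the sup), giving
\begin{align*}
  \sup_{c_t}\Exu{\yhat_t,S_t}{c_t(\yhat_t)+\Rel(I_{1:t})}
  \le C_t + \Exu{\rho_t}{\sup_{p_t\in\Delta'_D}\Exu{\chat_t\sim p_t}{\vecdot{q^*_t(\rho_t)}{\chat_t}+\sup_{\pi}\paranth{A_{\pi,t}-\chat_t(\pi(x_t))}}}.
\end{align*}

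By \eqref{eq.qopt}, $q^*_t(\rho_t)$ is exactly the minimizer over $q\in\Delta_K$ of the inner $\sup_{p_t}$-expression (the additive $\gamma(T-t)$ in $R$ does not change the argmin), so the bracket equals the minimax value $\min_{q\in\Delta_K}\sup_{p_t\in\Delta'_D}\Exu{\chat_t\sim p_t}{\vecdot{q}{\chat_t}+\sup_\pi(A_{\pi,t}-\chat_t(\pi(x_t)))}$. Since the objective is bilinear in $(q,p_t)$ and $\Delta_K,\Delta'_D$ are convex and compact, Sion's minimax theorem swaps this to $\sup_{p_t}\min_q$, and the inner minimum evaluates to $\min_i\Exu{\chat_t\sim p_t}{\chat_t(i)}$ (a linear form on the simplex is minimized at a coordinate). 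I would then run the textbook symmetrization: introduce an independent ghost copy $\chat_t'\sim p_t$, bound $\min_i\Exu{\chat_t'}{\chat_t'(i)}\le\Exu{\chat_t,\chat_t'}{\chat_t'(\pi^*_{\chat_t}(x_t))}$ for the maximizing policy $\pi^*_{\chat_t}$, pass to $\sup_\pi\paranth{A_{\pi,t}+\chat_t'(\pi(x_t))-\chat_t(\pi(x_t))}$, insert a Rademacher $\delta$ (valid as $\chat_t,\chat_t'$ are i.i.d.), split the supremum, and use that the two halves are identically distributed to collapse the ghost. This produces $\Exu{\chat_t\sim p_t,\delta}{\sup_\pi(2\delta\chat_t(\pi(x_t))+A_{\pi,t})}$, which matches the claimed right-hand side.

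The main obstacle I anticipate is the bookkeeping around random playout: making rigorous that the sampled $\rho_t$ driving $q_t(\rho_t)$ and the fresh $\rho_t$ integrated inside $\Rel$ may be merged, which rests entirely on Fact (i) that the law $p_t^{c_t}$ of $\chat_t$ is strategy-independent. The minimax swap and the symmetrization are standard, but I would take particular care to verify the constraint $\max_i p_t^{c_t}(i)\le\gamma/K$ of $\Delta'_D$ is genuinely implied by $c_t(i)\le 1$, since this is the exact point where restricting the supremum to the smaller simplex $\Delta'_D$ rather than all of $\Delta_D$ is justified, and it is what ultimately keeps the downstream Rademacher bound (Theorem~\ref{lm:rade}) tight.
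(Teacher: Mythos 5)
Your proposal is correct and follows essentially the same route as the paper's proof: split $q_t$ into $(1-\gamma)q^*_t + \tfrac{\gamma}{K}\mathbf{1}$ to absorb the exploration cost into $C_t$, pass from $c_t$ to the unbiased estimator $\chat_t$ whose law lies in $\Delta'_D$, take the supremum over $p_t \in \Delta'_D$ and move it inside $\Exu{\rho_t}{\cdot}$ via Jensen, invoke the minimax theorem, and finish with the standard ghost-sample symmetrization. The only cosmetic difference is that you make the strategy-independence of the estimator's law explicit upfront to justify merging the two copies of $\rho_t$, whereas the paper handles the same point by working with the averaged distribution $q^*_t = \Exu{\rho_t}{q^*_t(\rho_t)}$.
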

\begin{lemma}\label{lm:admis_new}
  Defining $A_{\pi, t}$ as in Lemma~\ref{lm:admis_prior},
  the following bound holds for any $t\in [T]$:
  \begin{align}
    \sup_{p_t \in \Delta'_D}
    \Exu{\chat_t \sim p_t, \delta}{
      \sup_{\pi \in \Pi}\paranth{2\delta\chat_t(\pi(x_t)) + A_{\pi, t}}
    }
    \le
    \Exu{\eps_t, Z_t}{
    \sup_{\pi \in \Pi}\paranth{2Z_t \cdot \eps_t(\pi(x_t)) + A_{\pi, t}}}
    .
  \label{eq:admis_new}
  \end{align}
\end{lemma}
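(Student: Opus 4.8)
The plan is to show that both sides of \eqref{eq:admis_new} reduce, after taking the relevant expectations, to the \emph{same} expression in the fixed family $\{A_{\pi,t}\}_{\pi\in\Pi}$, so the claimed inequality in fact holds with equality. Throughout, $A_{\pi,t}$ does not depend on the round-$t$ randomness $(\chat_t,\delta)$ or $(\eps_t,Z_t)$, so I treat $\{A_{\pi,t}\}$ as fixed. Write $M := K\gamma^{-1}$ and recall $D = \{M\mathbf{e}_i : i\in[K]\}\cup\{\mathbf{0}\}$, so any $p_t \in \Delta'_D$ is determined by the weights $p(i) := \Pr{\chat_t = M\mathbf{e}_i}$ subject to $0 \le p(i) \le \gamma/K$ and $p_0 := 1-\sum_i p(i) \ge 0$.

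First I would compute the inner expectation on the left. Since $\chat_t(\pi(x_t)) = M\,\ind{\pi(x_t)=i}$ when $\chat_t = M\mathbf{e}_i$ and equals $0$ when $\chat_t = \mathbf{0}$, averaging over the independent Rademacher $\delta$ gives, for each $i$,
\[
  B_i := \tfrac12\Big[\sup_{\pi}\big(2M\ind{\pi(x_t)=i}+A_{\pi,t}\big) + \sup_{\pi}\big(-2M\ind{\pi(x_t)=i}+A_{\pi,t}\big)\Big],
\]
together with $B_0 := \sup_\pi A_{\pi,t}$ for the atom at $\mathbf{0}$. Hence the left inner expectation equals $p_0 B_0 + \sum_i p(i) B_i = B_0 + \sum_i p(i)(B_i - B_0)$.

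The key step, which I expect to be the crux of the argument, is the observation that $B_i \ge B_0$ for every $i$: writing $A_{\pi,t}$ as the average of $2M\ind{\pi(x_t)=i}+A_{\pi,t}$ and $-2M\ind{\pi(x_t)=i}+A_{\pi,t}$ and using that the supremum of an average is at most the average of the suprema yields $B_0 = \sup_\pi A_{\pi,t} \le B_i$. This is precisely the symmetrization effect of the independent sign $\delta$. Because every coefficient $B_i - B_0$ is then nonnegative, maximizing the linear functional $\sum_i p(i)(B_i-B_0)$ over the polytope $\Delta'_D$ is achieved by saturating each cap, $p(i) = \gamma/K$ (feasible since $\sum_i \gamma/K = \gamma \le 1$). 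Therefore the left-hand side equals $(1-\gamma)B_0 + \frac{\gamma}{K}\sum_i B_i$.

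Finally I would compute the right-hand side directly from the law of $(\eps_t,Z_t)$: with probability $1-\gamma$ we have $Z_t=0$, contributing $(1-\gamma)B_0$; with probability $\gamma$ we have $Z_t=M$, and conditioned on this $\eps_t$ places its single nonzero entry on a uniformly random coordinate $i$ with a random sign, contributing $\frac{\gamma}{K}\sum_i B_i$ by the same $B_i$ bookkeeping. Thus the right-hand side also equals $(1-\gamma)B_0 + \frac{\gamma}{K}\sum_i B_i$, matching the left-hand side and establishing the (in fact tight) inequality. The one place requiring care is the constrained maximization over $\Delta'_D$: one must verify both that the cap $\gamma/K$ is simultaneously attainable on all $K$ coordinates and that $B_i \ge B_0$ makes this the genuine maximizer, which is exactly where the definition of $\Delta'_D$ and the one-hot structure of $\eps_t$ are seen to be perfectly matched.
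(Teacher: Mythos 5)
Your proof is correct and follows essentially the same route as the paper's: expand the $\delta$-expectation into the quantities $B_i$, use the symmetrization inequality $B_i \ge B_0$ to conclude the supremum over $\Delta'_D$ is attained at $p_t(i)=\gamma/K$ for all $i$, and match the result with the law of $(\eps_t, Z_t)$. Your additional observation that the bound actually holds with equality is a valid (if inessential) refinement of the paper's stated inequality.
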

Combining the above lemmas we obtain Equation \eqref{eq:admis_q_I} by definition of $\Rel{}$:
\begin{align*}
  \Exu{x_t}{
    \sup_{c_t}
    \Exu{\yhat_t,  S_t}{c_t(\yhat_t) + \Rel(I_{1:t})}
  }
  &\le
  \Exu{x_t, \rho_t}{
    \sup_{p_t \in \Delta'_D}
    \Exu{\chat_t, \delta}{
      \sup_{\pi \in \Pi}\paranth{2\delta\chat_t(\pi(x_t)) + A_{\pi, t}}
    }
    }
  + C_t
  \\&\le
  \Exu{x_t, \eps_t, Z_t, \rho_t}{
    \sup_{\pi \in \Pi}\paranth{2Z_t \cdot \eps_t(\pi(x_t)) + A_{\pi, t}}
  }
  + C_t
  \\&\le
  \Exu{\rho_{t-1}}{
    \sup_{\pi \in \Pi}\paranth{2Z_t \cdot \eps_t(\pi(x_t)) + A_{\pi, t}}}
  + C_t
  \\&\le \Rel{}(I_{1:t-1})
  .
\end{align*}
\begin{proof}[Proof of Lemma \ref{lm:admis_new}]
  For any distribution $p_t \in \Delta_D'$, the distribution of
  the each coordinate of $\chat_t$ has support on $\{0, \gamma^{-1}K\}$ and
  is equal to $\gamma^{-1}K$ with probability at most $\gamma/K$.
  Using $p_t(i)$ to denote $\Pr{\chat_t(i) = \gamma^{-1}K \cdot \bold{e}_i}$ we can rewrite the LHS (left hand side)
  of Equation \eqref{eq:admis_new} as
  \begin{align*}
    &\quad\sup_{p_t \in \Delta'_D}
    \Exu{\chat_t \sim p_t, \delta}{
      \sup_{\pi \in \Pi}\paranth{2\delta\chat_t(\pi(x_t)) + A_{\pi, t}}
    }
    \\&= \sup_{p_t\in \Delta'_D} \left((1-\sum_i p_t(i))\sup_\pi A_{\pi, t}
      +
    \sum_i p_t(i) \Exu{\delta}{\sup_\pi A_{\pi, t} + \frac{2K\delta}{\gamma} \ind{\pi(x_t) = i}}\right)
    \\&= 
    \sup_{0 \le p_t(i) \le \gamma/K} \left((1-\sum_i p_t(i))\sup_\pi A_{\pi, t}
      +
    \sum_i p_t(i) \Exu{\delta}{\sup_\pi A_{\pi, t} + \frac{2K\delta}{\gamma} \ind{\pi(x_t) = i}}\right)
    ,
  \end{align*}
  where the first equality follows from expanding the expectation with respect to $\chat_t$,
  and the second equality follows from the definition of $\Delta'_D$.
  We argue that this value is maximized when each $p_t(i)$ takes on its maximum value, i.e., $\gamma/K$. It suffices to observe that 
  \begin{align*}
    \Exu{\delta}{\sup_\pi A_{\pi, t} + \frac{2K\delta}{\gamma} \ind{\pi(x_t) = i}}
    \ge 
    \sup_\pi\left( A_{\pi, t} + \Exu{\delta}{\frac{2K\delta}{\gamma} \ind{\pi(x_t) = i}}\right)
    = 
    \sup_\pi A_{\pi, t},
  \end{align*}
  where the inequality follows from the fact that supremum of expectation is less than expectation of supremum, 
  and the  equality uses the fact that $\Ex{\delta} = 0$.
  Therefore, we maximize the LHS of Equation \eqref{eq:admis_new}
  via selecting $p_t$ that satisfies
  $p_t(i) = \frac{\gamma}{K}$ for $i \ge 1$. 
  It follows that
  \begin{align*}
    &\sup_{p_t \in \Delta'_D} 
    \Exu{\chat_t \sim p_t, \delta}{
      \sup_{\pi \in \Pi}\paranth{2\delta\chat_t(\pi(x_t)) + A_{\pi, t}}
    }
    \\&\le
    \left((1-\gamma)\sup_\pi A_{\pi, t}
      +
    \sum_i \frac{\gamma}{K} \Exu{\delta}{\sup_\pi A_{\pi, t} + \frac{2K\delta}{\gamma} \ind{\pi(x_t) = i}}\right)
    \\&=
    \Exu{\eps_t, Z_t}{
      \sup_{\pi \in \Pi}\paranth{2Z_t \cdot \eps_t(\pi(x_t)) + A_{\pi, t}}},
  \end{align*}
  finishing the proof.
\end{proof}
\subsection{Proof of Theorem~\ref{lm:rade}}
\label{sec:proof_rad}
We start with the following standard inequalities for handling
the supremum using the moment generating function.
\begin{align*}
  \Exu{(Z, \eps)_{1:T}}{
    \sup_{\pi \in \Pi} \sum_{i=1}^{T}Z_t \eps_t(\pi(x_t))
  }
  &=
  \Exu{Z_{1:T}}{
    \frac{1}{\lambda} \cdot \Exu{\eps_{1:T}}{
      \log \paranth{
        \sup_{\pi \in \Pi} e^{\lambda  \sum_{t=1}^{T}Z_t \eps_t(\pi(x_t))}
      }
    }
  }
  \\&\overset{}{\le}
  \Exu{Z_{1:T}}{
    \frac{1}{\lambda} \cdot \Exu{\eps_{1:T}}{
      \log \paranth{
        \sum_{\pi \in \Pi} e^{\lambda  \sum_{t=1}^{T}Z_t \eps_t(\pi(x_t))}
      }
    }
  }
  \\&\overset{(i)}{\le}
  \Exu{Z_{1:T}}{
    \frac{1}{\lambda} \cdot \log \paranth {
      \Exu{\eps_{1:T}}{
        \sum_{\pi \in \Pi} e^{\lambda  \sum_{t=1}^{T}Z_t \eps_t(\pi(x_t))}
      }
    }
  }
  \\&\overset{(ii)}{=}
  \Exu{Z_{1:T}}{
    \frac{1}{\lambda} \cdot \log \paranth {
      \sum_{\pi \in \Pi}
      \prod_{t=1}^{T}
      \Exu{\eps_{t}}{
         e^{\lambda  Z_t \eps_t(\pi(x_t))}
      }
    }
  }.
\end{align*}
Inequality $(i)$ holds due to the concavity of log and $(ii)$ follows from the independence of $\eps_t$.
We will additionally assume that $\lambda$ is upper bounded by $\gamma \sqrt{2} / K$ in the remaining analysis.

By our construction of the random variable $\eps_t$,
for any fixed $\pi$,
$\eps_t(\pi(x_t))$
takes the value $0$ with probability
$1-\frac{1}{K}$ and
the values $-1$ and $1$ each
with
probability $\frac{1}{2K}$. We therefore have that
\begin{align*}
  \Exu{\eps_{t}}{
     e^{\lambda  Z_t \eps_t(\pi(x_t))}
  }
  &=
  \rbr{1-\frac{1}{K}} + \frac{1}{2K} \cdot e^{\lambda  Z_t}  + \frac{1}{2K} \cdot e^{-\lambda  Z_t}
  \\&\le
  1-\frac{1}{K} + \frac{1}{K} \cdot e^{\lambda^2 Z_t^2/2}
  \\&\le
   e^{\frac{1}{K}(e^{\lambda^2 Z_t^2/2} - 1)}.
\end{align*}
The first inequality above uses $e^x+e^{-x} \le 2e^{x^2/2}$ while
the second inequality uses $e^x \ge 1+x$ for $x \in \mathbb{R}$.
This further yields
\begin{align*}
  \Exu{Z_{1:T}}{
    \frac{1}{\lambda} \cdot \log \paranth {
      \sum_{\pi \in \Pi}
      \prod_{t=1}^{T}
      \Exu{\eps_{t}}{
         e^{\lambda  Z_t \eps_t(\pi(x_t))}
      }
    }
  }
  &\le
  \Exu{Z_{1:T}}{
    \frac{1}{\lambda}
    \cdot
    \log \rbr{\abs{\Pi} \cdot \prod_{t=1}^{T} 
       e^{\frac{e^{\lambda^2 Z_t^2/2} - 1}{K}}
     }
  }
  \\&=
  \Exu{Z_{1:T}}{
    \frac{1}{\lambda}
    \log(\abs{\Pi})
    + 
    \sum_{t=1}^{T}
    \frac{e^{\lambda^2 Z_t^2/2} - 1}{\lambda K}
  }
  \\&=
  \frac{\log(|\Pi|)}{\lambda} + 
  \frac{1}{\lambda K} \sum_{t=1}^{T} \Exu{Z_{1:T}}{
    e^{\lambda^2 Z_t^2/2} - 1
  }
  .
\end{align*}
Recall that
$Z_t$ takes the values
$0$ and $\frac{K}{\gamma}$
with probabilities $1-\gamma$ and $\gamma$ respectively.
It follows that
\begin{align*}
  \Exu{Z_{1:T}}{
    e^{\lambda^2 Z_t^2/2} - 1
  }
  = 
  \gamma \rbr{ e^{\frac{\lambda^2K^2}{2\gamma^2}} - 1}
  \le 
  \gamma \frac{\lambda^2K^2}{\gamma^2},
\end{align*}
where the inequality
follows from the fact that
$e^{x} - 1 \le 2x$ for $x \in (0, 1)$
and the assumption $\lambda \le \gamma\sqrt{2}/K$.
Therefore,
\begin{align*}
  \Exu{Z_{1:T}}{
    \frac{1}{\lambda} \cdot \log \paranth {
      \sum_{\pi \in \Pi}
      \prod_{t=1}^{T}
      \Exu{\eps_{t}}{
         e^{\lambda  Z_t \eps_t(\pi(x_t))}
      }
    }
  }
  \le 
  \frac{\log(|\Pi|)}{\lambda} + \frac{TK\lambda}{\gamma}.
\end{align*}
By taking derivative with respect to $\lambda$ we obtain
\begin{align*}
  \frac{-\log(|\Pi|)}{\lambda^2} + TK/\gamma=0,
\end{align*}
and compute that the equation above is minimized at 
 $\lambda = \sqrt{\frac{\gamma\log(|\Pi|)}{TK}}$.
We note that $\lambda$ satisfies the assumption $\lambda \le \gamma \sqrt{2}/K$ 
because this is equivalent to $\frac{\log(|\Pi|)}{2T} < \frac{\gamma}{K},$ which holds by the assumption of the lemma.
Plugging this again yields
\begin{align*}
  \log(|\Pi|)\sqrt{\frac{TK}{\gamma \log(|\Pi|)}}+\frac{TK}{\gamma}\sqrt{\frac{\gamma \log(|\Pi|)}{TK}} = 2\sqrt{TK\log(|\Pi|)/\gamma},
\end{align*}
which is the desired bound.
\section{Conclusion}
In this paper, we presented a novel efficient relaxation for the
adversarial contextual bandits problem and proved that its regret is upper
bounded by $O(T^{2/3}(K\log|\Pi|)^{1/3})$. This provides a marked improvement
with respect to the parameter $K$ as compared to the prior best result
and
matches
the original baseline of
\cite{langford2007epoch} for the stochastic version of the problem. 
As mentioned earlier,
non-efficient algorithms can obtain a regret bound of
$O(\sqrt{TK \log(|\Pi|)})$, 
which is information theoretically optimal.
While oracle-efficient algorithms can obtain the optimal regret bound
in the stochastic setting~\citep{dudik2011efficient},
they do not always achieve optimal
regret rates~\citep{hazan2016computational}.
Whether or not optimal regret can be obtained in our setting using
efficient algorithms remains an open problem
and improving both the upper and lower bounds
are interesting directions for future work.
Additionally, while our work operates in the same setting as prior work \citep{rakhlin2016bistro,syrgkanis2016improved},
it would be interesting to relax some of the assumptions in the setting, most notably the sampling access to the context distribution.

\section{Acknowledgements}

This work is partially supported by DARPA QuICC NSF AF:Small \#2218678, and NSF AF:Small \#2114269.  
We thank Alex Slivkins for pointing us to the problem and initial fruitful discussions.

\bibliographystyle{ACM-Reference-Format}
\bibliography{ref}
\clearpage
\appendix

\thispagestyle{empty}
\onecolumn 

\section{Proof of Theorem~\ref{thm.admis_regret}}\label{app:main_thm}
  Combining Equation \ref{eq:may14_1331} and Theorem \ref{lm:rade} we obtain
  \begin{align*}
    \Reg_T \le 4\sqrt{\frac{TK\log(|\Pi|)}{\gamma}} + \gamma T.
  \end{align*}
  Setting the derivative with respect to $\gamma$ of RHS to zero, we obtain
  \begin{align*}
    T - 2\sqrt{TK\log|\Pi|}\gamma^{-3/2} = 0,
  \end{align*}
  which is equivalent to $\gamma = \paranth{\frac{4K\log|\Pi|}{T}}^{1/3}.$ Note however that $\gamma$ needs to satisfy
  $\gamma > KT^{-1}\log(|\Pi|)/2$ and
  $\gamma \le 1$.
  To verify the first condition, we should have $T > K\log(|\Pi|)/(2\gamma)$.
  Putting our designated $\gamma$ implies that $T$ should satisfy
  \begin{align*}
    \quad T > K \log(|\Pi|) \cdot (\frac{T}{4K\log(|\Pi|)})^{1/3} \cdot 1/2
    &\iff 
    T^3 > K^3 \log^3(|\Pi|) \frac{T}{4K \log(|\Pi|)} \cdot 1/ 8
    \\
    &\iff T^2 > \frac{K^2 \log^2 |\Pi|}{32} \iff T > \frac{K\log |\Pi|}{4\sqrt{2}},
  \end{align*}
  which holds by assumption.
  To verify the second condition of $\gamma \le 1$, we need to have $T > 4K\log(|\Pi|)$,
  which again holds by assumption.

  Plugging $\gamma$ to the regret bound, we have
  \begin{align*}
     \mO(T^{2/3} (K\log|\Pi|)^{1/3}),
  \end{align*}
  completing the proof.

\section{Proof of Lemma~\ref{lem.effic}}\label{app:effic}
The proof is based on Lemma 4 in \cite{syrgkanis2016improved} and is provided for completeness.
The pseudocode of our algorithm is provided in Algorithm \ref{alg:optimize}.
\begin{algorithm}[ht]
  \SetAlgoLined
  \textbf{Input:} value-of-ERM oracle, $(x,\chat)_{1:t-1}, x_t$ and $\rho_t$ \\
  \textbf{Output:} $q_t^*(\rho_t)$ as in Equation \ref{eq.qopt} \\
  Compute for all $i \in [K], \psi_i = \inf_{\pi \in \Pi}$ of 
  \begin{align*}
  	\sum_{\tau=1}^{t-1}\chat_\tau(\pi(x_\tau)) + \frac{K}{\gamma} \bold{e}_i(\pi(x_t)) + \sum_{\tau=t+1}^T2Z_\tau \eps_\tau(\pi(x_\tau))
  \end{align*}
  using the value-of-ERM oracle \\
  Compute $\eta_i = \frac{\gamma(\psi_i - \psi_0)}{K}$ for all $i \in [K]$ \\
  Set $m = 1, q = \bold{0}$ \\
  \For{$k = 1, 2, \ldots, K$}{
  	$q(i) \leftarrow \min \{ (\eta_i)^+, m \}$, \quad
  	$m \leftarrow m - q(i)$ \\
  }
  If $m > 0$, distribute remaining $m$ uniformly across coordinates of $q$
 \caption{Compute $q_t^*$}\label{alg:optimize}
\end{algorithm}
\begin{proof}
	We prove the result by rewriting the minimizer equation to be composed of only calls to our value-of-ERM oracle. For $i \in [K]$, we define $\psi_i$ to be
	\begin{align*}
		\inf_{\pi \in \Pi} \left( \sum_{\tau=1}^{t-1}\chat_\tau(\pi(x_\tau)) + \gamma^{-1} K \bold{e}_i(\pi(x_t)) + \sum_{\tau=t+1}^T2Z_\tau \eps_\tau(\pi(x_\tau)) \right)
	\end{align*}
	where $\bold{e}_0 = \bold{0}$. We can thus write the definition of $q_t^*(\rho_t)$ as
	\begin{align*}
		\arg\inf_{q \in \Delta_K} \sup_{p_t \in \Delta'_{D}} \sum_{i=1}^K p_t(i) \left(\frac{K q(i)}{\gamma} - \psi_i\right) - p(0) \cdot \psi_0
	\end{align*}
	and note that the values of $\psi_i$ can be computed via a single call to the value-of-ERM oracle, and moreover we require $K+1$ calls to compute all the $\psi_i$.
	
	Now to compute the minimizer of the above, we first let $z_i = \frac{K q(i)}{\gamma} - \psi_i$ and $z_0 = -\psi_0$ and rewrite the minimax value as
	\begin{align*}
		\arginf_{q \in \Delta_D}\sup_{p_t \in \Delta'_D} \sum_{i=1}^K p_t(i) \cdot z_i + p_t(0) \cdot z_0.
	\end{align*}
  We reiterate that each $p_t(i) \leq \gamma / K$ for $i > 0$ and thus we must
  distribute maximal probability across the $z_i$ coordinates of largest value,
  i.e. we will put as much probability weight as permitted on $\argmax_{i \in
  [K]} z_i$, and proceed to do the same for the second largest value, repeating
  the process until we exhaust the probability distribution or reach the
  terminal $z_0$. At this point, we can put the remainder of the probability
  weight on this final coordinate to ensure summation to 1.
	
	To proceed in analyzing the above water-filling argument, sort the coordinates $z_i$ such that $z_{(1)} \geq z_{(2)} \geq ... \geq z_{(K)}$ for $i > 0$ and further define index $\mu$ to be the smallest index such that $z_{(\mu)} \geq z_0$. By the probability weight distribution argument above, we can reduce the supremum over $p_t$ to be
	\begin{align*}
		\sum_{i=1}^\mu \frac{\gamma}{K} z_{(i)} + (1-\frac{\gamma}{K} \mu)z_{0} = \sum_{i=1}^\mu \frac{\gamma}{K}(z_{(i)} - z_0) + z_0
  .
	\end{align*}
  since we maximize $p(i)$ for the $z_{(i)}$ in order and set $p(i) = 0$ for
  terms less than $z_0$, which would otherwise yield an addition of negative
  terms in the above summation. 
	Thus, by definition of $\mu$, we have for $i > \mu$ 
	\begin{align*}
		\sum_{i=1}^\mu \frac{\gamma}{K}(z_{(i)} - z_0) + z_0 = \sum_{i=1}^K \frac{\gamma}{K} (z_{(i)} - z_0)^+ + z_0
  .
	\end{align*}
	where the $+$ superscript denotes the \textsc{ReLU} operator, $(x)^+ = \max\{x,0\}$. Therefore, the minimax expression is reformulated as
	\begin{align*}
		\arg \inf_{q \in \Delta_D} \sum_{i=1}^K \frac{\gamma}{K} (z_{(i)} - z_0)^+ + z_0
        .
	\end{align*}
	which is equivalent to minimizing the \textsc{ReLU} term
	\begin{align*}
		\arg \inf_{q \in \Delta_D} \sum_{i=1}^K \frac{\gamma}{K} (z_{(i)} - z_0)^+ = \arg \inf_{q \in \Delta_D} \sum_{i=1}^K \left(q(i) - \frac{\gamma}{K} \cdot(\psi_i - \psi_0)\right)^+
           .
	\end{align*}
	Simplify notation by setting $\eta_i = \frac{\gamma (\psi_i - \psi_0)}{K}$ so that the expression becomes
	\begin{align*}
		\arg \inf_{q \in \Delta_D} \sum_{i=1}^K (q(i) - \eta_i)^+
         .
	\end{align*}
  We argue the minimization procedure as follows: select $i \in [K]$ such that
  $\eta_i \le 0$. Any positive probability weight on $q(i)$ will yield an
  increase in the expression, whereas for $\eta_i > 0$ we experience no
  increase in the objective until the value of $\eta_i$ surpasses $q(i)$. Thus,
  the minimizer will weight the actions with $\min \{\sum_{i : \eta_i > 0}
  \eta_i,1\}$ on the coordinates with $\eta_i > 0$ and distribute the remaining
  weight (if nonzero) arbitrarily among $[K]$. This is more precisely outlined
  in the pseudocode of Algorithm \ref{alg:optimize}.
\end{proof}

\section{Proof of Lemma \ref{lm:admis_prior}}
\label{app:admis_prior}
\begin{proof}[Proof of Lemma \ref{lm:admis_prior}]
  Denote by $q^*_t = \Exu{\rho_t}{q^*_t(\rho_t)}$.
  We note that since we are drawing from $\rho_t$, calculating $q^*_t(\rho_t)$, and then drawing from $q^*_t(\rho_t)$, our algorithm is effectively sampling from $q^*_t$, \emph{even though we do not calculate $q^*_t$ explicitly.}
  Now note that
  \begin{align*}
    \Exu{\yhat_t \sim q_t}{c_t(\yhat_t)}
    = \vecdot{q_t}{c_t} \le \vecdot{q^*_t}{c_t} + \gamma\vecdot{\frac{\bold{1}}{K}}{c_t} 
    \le \Exu{\yhat_t, \chat_t}{\vecdot{q^*_t}{\chat_t}} + \gamma.
  \end{align*}
  holds by definition of $q_t(\rho_t)$ and the assumed bounds on $c_t$.
  Thus, it further holds that
  \begin{align*}
    \sup_{c_t \in [0, 1]^K}
    \Exu{\yhat_t, \chat_t}{c_t(\yhat_t) + \Rel(I_{1:t})}
    &\le
    \gamma +
    \sup_{c_t \in [0, 1]^K}
    \Exu{\yhat_t, \chat_t}{\vecdot{q^*_t}{\chat_t} + \Rel(I_{1:t})}
    .
  \end{align*}
  By expansion of the second term on the right hand side, we rewrite the relation as
  \begin{align*}
    &\sup_{c_t \in [0, 1]^K}
    \Exu{\yhat_t, \chat_t}{\vecdot{q^*_t}{\chat_t} + \Rel(I_{1:t})}
    \\&=
    \sup_{c_t \in [0, 1]^K}
    \Exu{\yhat_t, \chat_t}{\vecdot{q^*_t}{\chat_t} + \Exu{\rho_t}{R((x, \chat)_{1:t}, \rho_t)}}
    \\&=
    \sup_{c_t \in [0, 1]^K}
    \Exu{\yhat_t, \chat_t}{\vecdot{q^*_t}{\chat_t} + \Exu{\rho_t}{
      \sup_{\pi \in \Pi}\paranth{-\chat_t(\pi(x_t)) + A_{\pi, t}}
      }
    }  + \gamma(T-t)
    \\&=
    \sup_{c_t \in [0, 1]^K}
    \Exu{\yhat_t, \chat_t}{\Exu{\rho_t}{\vecdot{q^*_t(\rho_t)}{\chat_t} + 
      \sup_{\pi \in \Pi}\paranth{-\chat_t(\pi(x_t)) + A_{\pi, t}}
      }
    }  + \gamma(T-t)
    .
  \end{align*}
  Furthermore, the symmetric construction
  of $\chat_t$ implies that $\chat_t$ takes the value $Ke_i/\gamma$ with
  probability at most $\gamma/K$. Therefore, we know that $\chat_t$ is sampled
  from a $p_t \in \Delta'_D$ where $\Delta'_D$ is defined as in
  Equation \eqref{eq:delta_prim}. Taking the maximum over \emph{all possible $p_t$}, we
  bound the supremum term above with
  \begin{align*}
    &\sup_{c_t \in [0, 1]^K}
    \Exu{\yhat_t, \chat_t}{
      \Exu{\rho_t}{
        \vecdot{q^*_t(\rho_t)}{\chat_t} + 
        \sup_{\pi \in \Pi}\paranth{-\chat_t(\pi(x_t)) + A_{\pi, t}}
      }
    }
    \\&\le
    \sup_{p_t \in \Delta'_D}
    \Exu{\chat_t \sim p_t}{
      \Exu{\rho_t}{
        \vecdot{q^*_t(\rho_t)}{\chat_t} + 
        \sup_{\pi \in \Pi}\paranth{-\chat_t(\pi(x_t)) + A_{\pi, t}}
      }
    }
    \\&\le
    \Exu{\rho_t}{
      \sup_{p_t \in \Delta'_D}
      \Exu{\chat_t \sim p_t}{
        \vecdot{q^*_t(\rho_t)}{\chat_t} + 
        \sup_{\pi \in \Pi}\paranth{-\chat_t(\pi(x_t)) + A_{\pi, t}}
      }
    }
    .
  \end{align*}
  where the second inequality is an application of Jensen's inequality.
  We first replace the optimized $q_t^*$ by the corresponding infimum operator over $q_t$, thus the term inside the expectation is equivalent to
  \begin{align*}
  	\inf_{q \in \Delta_K}
    \sup_{p_t \in \Delta'_D}
    \Exu{\chat_t \sim p_t}{
      \vecdot{q}{\chat_t} + 
      \sup_{\pi \in \Pi}\paranth{-\chat_t(\pi(x_t)) + A_{\pi, t}}
    }.
  \end{align*}
  conditioned on $\rho_t$. By the minimax theorem, we can interchange the infimum and supremum operators without decreasing the quantity, and this is further upper bounded as
  \begin{align*}
  	&\inf_{q \in \Delta_K}
    \sup_{p_t \in \Delta'_D}
    \Exu{\chat_t \sim p_t}{
      \vecdot{q}{\chat_t} + 
      \sup_{\pi \in \Pi}\paranth{-\chat_t(\pi(x_t)) + A_{\pi, t}}
    }
    \\&=
    \sup_{p_t \in \Delta'_D}
    \inf_{q \in \Delta_K}
    \Exu{\chat_t \sim p_t}{
      \vecdot{q}{\chat_t} + 
      \sup_{\pi \in \Pi}\paranth{-\chat_t(\pi(x_t)) + A_{\pi, t}}
    }
    .
  \end{align*}
  and moreover, since the objective is linear with respect to $q_t$ we can instead work with
  \begin{align*}
  	\sup_{p_t \in \Delta'_D}
    \min_{i \in [K]}
    \Exu{\chat_t \sim p_t}{
      \chat_t(i) +
      \sup_{\pi \in \Pi}\paranth{-\chat_t(\pi(x_t)) + A_{\pi, t}}
    }.
  \end{align*}
  Symmmetrization permits us to rewrite the above as 
  \begin{align*}
  	&\sup_{p_t \in \Delta'_D}
    \min_{i \in [K]}
    \Exu{\chat_t \sim p_t}{
      \chat_t(i) +
      \sup_{\pi \in \Pi}\paranth{-\chat_t(\pi(x_t)) + A_{\pi, t}}
    }
    \\&= \sup_{p_t \in \Delta'_D}
    \Exu{\chat_t \sim p_t}{
      \sup_{\pi \in \Pi}\paranth{\min_{i \in [K]}\Exu{\chat'_t \sim p_t}{\chat'_t(i)}-\chat_t(\pi(x_t)) + A_{\pi, t}}
    }
    \\&\le \sup_{p_t \in \Delta'_D}
    \Exu{\chat_t \sim p_t}{
      \sup_{\pi \in \Pi}\paranth{\Exu{\chat'_t \sim p_t}{\chat'_t(\pi(x_t))}-\chat_t(\pi(x_t)) + A_{\pi, t}}
    }
    \\&\le \sup_{p_t \in \Delta'_D}
    \Exu{\chat_t, \chat'_t \sim p_t}{
      \sup_{\pi \in \Pi}\paranth{\chat'_t(\pi(x_t))-\chat_t(\pi(x_t)) + A_{\pi, t}}
    }
    .
  \end{align*}
  where the last inequality is an additional application of Jensen's inequality.
  Since $\chat_t, \chat'_t$ are sampled from the same distribution, this can be rewritten as
    \begin{align*}
    &\sup_{p_t \in \Delta'_D}
    \Exu{\chat_t, \chat'_t \sim p_t}{
      \sup_{\pi \in \Pi}\paranth{\chat'_t(\pi(x_t))-\chat_t(\pi(x_t)) + A_{\pi, t}}
    }
    \\&=
    \sup_{p_t \in \Delta'_D}
    \Exu{\chat_t, \chat'_t \sim p_t, \delta}{
      \sup_{\pi \in \Pi}\paranth{\delta\paranth{\chat'_t(\pi(x_t))-\chat_t(\pi(x_t))} + A_{\pi, t}}
    }
    \\&=
    \sup_{p_t \in \Delta'_D}
    \Exu{\chat_t, \chat'_t \sim p_t, \delta}{
      \sup_{\pi \in \Pi}\paranth{\delta\chat'_t(\pi(x_t))-\delta\chat_t(\pi(x_t)) + A_{\pi, t}}
    }
    .
    \end{align*}
    and further split the term within the inner supremum to obtain 
    \begin{align*}
    &\sup_{p_t \in \Delta'_D}
    \Exu{\chat_t, \chat'_t \sim p_t, \delta}{
      \sup_{\pi \in \Pi}\paranth{\delta\chat'_t(\pi(x_t))-\delta\chat_t(\pi(x_t)) + A_{\pi, t}}
    }
    \\&=
    \sup_{p_t \in \Delta'_D}
    \Exu{\chat_t, \chat'_t \sim p_t, \delta}{
      \sup_{\pi \in \Pi}\paranth{\delta\chat'_t(\pi(x_t)) + \frac{A_{\pi, t}}{2} -\delta\chat_t(\pi(x_t)) + \frac{A_{\pi, t}}{2}}
    }
    \\&\le
    \sup_{p_t \in \Delta'_D}
    \Exu{\chat_t, \chat'_t \sim p_t, \delta}{
      \sup_{\pi \in \Pi}\paranth{\delta\chat'_t(\pi(x_t)) + \frac{A_{\pi, t}}{2}} + \sup_{\pi \in \Pi} \paranth{ -\delta\chat_t(\pi(x_t)) + \frac{A_{\pi, t}}{2}}
    }
    \\&=
    \sup_{p_t \in \Delta'_D}
    \Exu{\chat_t \sim p_t, \delta}{
      \sup_{\pi \in \Pi}\paranth{2\delta\chat_t(\pi(x_t)) + A_{\pi, t}}
    }.
  \end{align*}
\end{proof}

\section{Proof of the final step condition} \label{app:final}
\begin{lemma}
  The relaxation function \eqref{eq.relax} satisfies the final step condition \eqref{admis_ineq2}.
\end{lemma}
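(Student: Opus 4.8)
The plan is to first observe that the final step condition concerns the relaxation evaluated at the terminal round $t = T$, where it collapses to a trivial form. Since $\rho_T = (x, \eps, Z)_{T+1:T}$ is empty and the hallucinated future sum $\sum_{\tau = T+1}^T 2 Z_\tau \eps_\tau(\pi(x_\tau))$ vanishes, the outer expectation over $\rho_T$ disappears and
\begin{align*}
  \Rel(I_{1:T}) = -\inf_{\pi \in \Pi} \sum_{\tau = 1}^T \chat_\tau(\pi(x_\tau)).
\end{align*}
Substituting this into \eqref{admis_ineq2} and negating both sides, the condition I must establish becomes equivalent to
\begin{align*}
  \Exu{\yhat_{1:T}, S_{1:T}}{\inf_{\pi \in \Pi} \sum_{\tau = 1}^T \chat_\tau(\pi(x_\tau))}
  \le
  \inf_{\pi \in \Pi} \sum_{\tau = 1}^T c_\tau(\pi(x_\tau)),
\end{align*}
which must hold for every fixed choice of $x_{1:T}$, $c_{1:T}$, and $q_{1:T}$.

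Second, I would prove this inequality by the standard ``fix-a-policy'' argument that converts an expected infimum into an infimum of expectations. For any fixed policy $\pi' \in \Pi$ we have the pointwise bound $\inf_{\pi} \sum_\tau \chat_\tau(\pi(x_\tau)) \le \sum_\tau \chat_\tau(\pi'(x_\tau))$. Taking expectations and using linearity, the right hand side becomes $\sum_\tau \Exu{\yhat_\tau, S_\tau}{\chat_\tau(\pi'(x_\tau))}$, at which point I would invoke the unbiasedness of the estimator established after \eqref{eq.un_est}, namely $\Exu{\yhat_\tau, S_\tau}{\chat_\tau(i)} = c_\tau(i)$, to obtain $\Ex{\inf_\pi \sum_\tau \chat_\tau(\pi(x_\tau))} \le \sum_\tau c_\tau(\pi'(x_\tau))$. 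Since this holds for every $\pi'$, taking the infimum over $\pi'$ on the right hand side yields exactly the desired bound. Conceptually this is just Jensen's inequality for the concave map $\inf$, but phrasing it through an arbitrary fixed policy avoids any issue with whether the infimum over $\Pi$ is attained.

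The main subtlety, rather than a genuine obstacle, is the conditioning underlying the unbiasedness claim across the $T$ rounds. The identity $\Ex{\chat_\tau(i)} = c_\tau(i)$ relies on $c_\tau$ and $q_\tau$ being determined \emph{before} $\yhat_\tau$ is drawn, so one must ensure that $c_\tau$ (and hence the fixed comparator $\pi'$) is not itself a random function of the realized $\yhat_{1:\tau}$. Freezing $x_{1:T}, c_{1:T}, q_{1:T}$ in the statement of \eqref{admis_ineq2} is precisely what makes this clean: with these quantities held fixed, each $\chat_\tau$ is an independent unbiased draw depending only on its own $\yhat_\tau$ and internal coin, the comparator $\pi'$ is a constant, and the interchange of expectation and summation is immediate. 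Were one instead to argue for the algorithm's adaptively chosen $q_\tau$, the single expectation would be replaced by a tower of conditional expectations over the natural filtration, but the frozen-input formulation of the admissibility condition sidesteps this bookkeeping entirely.
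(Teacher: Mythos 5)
Your proof is correct and follows essentially the same route as the paper: both collapse $\Rel(I_{1:T})$ to $-\inf_{\pi}\sum_{\tau}\chat_\tau(\pi(x_\tau))$ at $t=T$, exchange the expectation with the infimum/supremum (your fix-a-policy phrasing is just the standard way of writing that Jensen-type step), and then invoke the unbiasedness $\Exu{\yhat_\tau}{\chat_\tau(i)} = c_\tau(i)$. Your added remark on why freezing $c_{1:T}, q_{1:T}$ makes the unbiasedness legitimate across rounds is a fair clarification but not a departure from the paper's argument.
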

\begin{proof}
  By setting $t = T$ in our relaxation \textsc{Rel}$(I_{1:T})$ and definition of our unbiased estimator $\chat_t$ of $c_t$,
  \begin{align*}
    \Exu{\yhat_{1:T}}{\textsc{Rel}(I_{1:T})} &= \Exu{\yhat_{1:T}}{-\sup_{\pi \in \Pi} \sum_{\tau=1}^T \chat_\tau (\pi(x_\tau))} \\
    &\geq \sup_{\pi \in \Pi} - \Exu{\yhat_{1:T}}{\sum_{\tau = 1}^T \chat_\tau (\pi(x_\tau))} 
    \\&=
    \sup_{\pi \in \Pi} - \sum_{\tau=1}^T c_\tau(\pi(x_\tau))
    .
  \end{align*}
\end{proof}

\end{document}